\def\Pr{\mathop{\rm Pr}}
\def\B{{\mathcal B}}
\def\P{{\mathcal P}}
\def\sPr{{\mathsf{Pr}}}
\def\sX{{\mathds X}}
\def\sU{{\mathds U}}
\def\sU{{\mathds U}}
\pgfplotsset{compat=1.6}
\newtheorem{assumption}{Assumption}[section]
  \newtheorem{remark}{Remark}[section]
  \newtheorem{example}{Example}[section]
\newcommand{\R}{\mathds{R}}
\newcommand{\Zplus}{\mathds{Z}_+}
\newcommand{\N}{\mathds{N}}
\newcommand{\dd}{\mathrm{d}}
\pgfplotsset{soldot/.style={color=blue,only marks,mark=*}}
\pgfplotsset{holdot/.style={color=blue,fill=white,only marks,mark=*}}
\begin{document}

\sloppy
\title{Convergence of Finite Memory Q-Learning for POMDPs and Near Optimality of Learned Policies under Filter Stability
}
\author{Al\.{I} Devran Kara and Serdar Y\"uksel
\thanks{Al\.{I} Devran Kara is with the Department of Mathematics, University of Michigan, Ann Arbor, MI, USA, email: alikara@umich.edu. Serdar Y\"uksel is with the Department of Mathematics and Statistics,
     Queen's University, Kingston, ON, Canada,
     email: yuksel@queensu.ca}
     }
\maketitle
\begin{abstract}
In this paper, for POMDPs, we provide the convergence of a Q learning algorithm for control policies using a finite history of past observations and control actions, and, consequentially, we establish near optimality of such limit Q functions under explicit filter stability conditions. We present explicit error bounds relating the approximation error to the length of the finite history window. We establish the convergence of such Q-learning iterations under mild ergodicity assumptions on the state process during the exploration phase. We further show that the limit fixed point equation gives an optimal solution for an approximate belief-MDP. We then provide bounds on the performance of the policy obtained using the limit Q values compared to the performance of the optimal policy for the POMDP, where we also present explicit conditions using recent results on filter stability in controlled POMDPs. While there exist many experimental results, (i) the rigorous asymptotic convergence (to an approximate MDP value function) for such finite-memory Q-learning algorithms, and (ii) the near optimality with an explicit rate of convergence (in the memory size) under filter stability are results that are new to the literature, to our knowledge. 
 \end{abstract}

%\maketitle

\section{Introduction}

Partially Observed Markov Decision Problems (POMDPs) offer a practically rich and relevant, and mathematically challenging model. Even in the most basic setup of finite state-action models, the analysis and computation of optimal solutions is complicated. The existence of optimal policies has in general been established via converting, or reducing, the original partially observed stochastic control problem to a fully observed Markov Decision Problem (MDP) with probability measure valued (belief) states, leading to a belief-MDP. However, computing an optimal policy for this fully observed model, and so for the original POMDP, using classical methods (such as dynamic programming, policy iteration, linear programming) is not simple even if the original system has finite state and action spaces, since the state space of the fully observed (reduced) model is always uncountable. Furthermore, when the dynamics are not known, learning theoretic methods have not been as comprehensively and conclusively studied as the fully observed counterpart for MDPs, mainly because of the the technical subtleties as we discuss further below.

{\bf On approximation methods.} The problem of approximate optimality is significantly more challenging compared to the fully observed counterpart. Most of the studies in the literature are algorithmic and computational contributions. These include \cite{porta2006point} and \cite{ZhHa01} which develop computational algorithms, utilizing structural convexity/concavity properties of the value function under the discounted cost criterion. \cite{spaan2005perseus} provides an insightful algorithm which may be regarded as a quantization of the belief space; however, no rigorous convergence results are provided. References \cite{smith2012point} and \cite{pineau2006anytime} also present quantization based algorithms for the belief state, where the state, measurement, and the action sets are finite. 

For partially observed setups, \cite{SYLTAC2017POMDP}, building on \cite{SaYuLi15c}, introduces a rigorous approximation analysis (and explicit methods for quantization of probability measures) after establishing weak continuity conditions on the transition kernel defining the (belief-MDP) via the non-linear filter \cite{FeKaZa12, KSYWeakFellerSysCont}, and shows that finite model approximations obtained through quantization are asymptotically optimal and the control policies obtained from the finite model can be applied to the actual system with asymptotically vanishing error as the number of quantization bins increases. Another rigorous set of studies is \cite{zhou2008density} and \cite{zhou2010solving} where the authors provide an explicit quantization method for the set of probability measures containing the belief states, where the state space is parametrically representable under strong density regularity conditions. The quantization is done through the approximations as measured by the Kullback-Leibler divergence (relative entropy) between probability density functions. \cite{Mahajan2019} presents a notion of approximate information variable and studies near optimality of policies that satisfies the approximate information state property. %In \cite{mao2020}, a similar problem is analyzed under a decentralized setup.

We refer the reader to the survey papers \cite{Lov91-(b),Whi91,hansen2013solving} and the recent book \cite{Kri16} for further structural results as well as algorithmic and computational methods for approximating POMDPs. Notably, for POMDPs \cite{Kri16} presents structural results on optimal policies under monotonicity conditions of the value function in the belief variable. 

{\bf On learning for POMDPs.} Learning in POMDPs is challenging for the reasons discussed above: if one attempts to learn optimal policies through empirical observations, then the analysis and convergence properties become significantly harder to obtain as the observations progress in a non-Markovian fashion and the belief state is uncountable. \cite{jaakkola1995reinforcement} studies a learning algorithm for POMDPs with average cost criteria where a policy improvement method is proposed using random polices and the convergence of this method to local optima is given. \cite{mccallum1997reinforcement} and \cite{lin1992memory} are studies that propose the same approach as we use in this paper, where they use a finite memory of history to construct learning algorithms. They provide extensive experimental results, however, both lack a rigorous convergence or approximation result.

A natural, though optimistic, suggestion to attempt to learn POMDPs would be to ignore the partial observability and pretend the noisy observations reflect the true state perfectly. For example, for infinite horizon discounted cost problems, one can construct Q iterations as:
\begin{align}\label{QPOMDP}
Q_{k+1}(y_k,u_k)=(1-\alpha_k(y_k,u_k))Q_k(y_{k},u_k)+\alpha_k(y_k,u_k)\left(C_k(y_k,u_k)+\beta \min_v Q_k(Y_{k+1},v)\right)
\end{align}
where $y_k$ represents the observations and $u_k$ represents the control actions. We can further improve this algorithm by using not only the most recent observation but a finite window of past observations and control actions since we can infer information on the true state from the past data. Two main problems with this approach are that (i) first, the $(Y_k,U_k)$ process is not a controlled Markov process (as only $(X_k,U_k)$ is) and the cost realizations $C_k(y_k,u_k)$ depend on the observation process in a random and a time-dependent fashion, and hence the convergence of this approach does not follow directly from usual techniques (\cite{jaakkola1994convergence, TsitsiklisQLearning}) and (ii) second, even if the convergence is guaranteed, it is not immediate what the limit Q values are, and whether they are meaningful at all. In particular, it is not known what MDP model gives rise to the limit Q values. 

\cite{singh1994learning} studied (\ref{QPOMDP}), that is the Q learning algorithm for POMDPs by ignoring the partial observability and constructing the algorithm using the most recent observation variable (where the state, action and measurements spaces were all assumed finite), and established convergence of this algorithm under mild conditions (notably that the hidden state process is uniquely ergodic under the exploration policy which is random and puts positive measure to all action variables). In our paper, we will consider memory sizes of more than $0$ for the information variables and a continuous state space, and thus the algorithm in \cite{singh1994learning} can be seen as a special case of our setup. Different from our work, however, \cite{singh1994learning} does not study what the limit of the iterations mean, and in particular whether the limit equation corresponds to some MDP model. In this paper, we rigorously construct the approximate belief MDP that the limit equation satisfies which gives an operational and practical conclusion regarding the analysis of the algorithm. Furthermore, we use different window sizes which turns out to be crucial for the performance of the learned policy: using longer window sizes reveals the intimate connection between the approximate learning problem and the nonlinear controlled filter stability problem that we will study in detail. This ultimately leads to near optimality of the $N$-window variation of (\ref{QPOMDP}) with an explicit approximation and robustness error bound as a function of $N$ and a computable/boundable coefficient related to filter stability.

Another motivation for our study is the following: often one deals with problems where not only the specification of an MDP is unknown, but {\it whether the problem is an MDP} in the first place may not be known. The simplest extension perhaps is that of a POMDP where one is tempted to view the measurements as the state, or finite window of measurement and actions as the state. A question, which has not been resolved fully, is whether a Q-learning algorithm for such a setup would indeed converge, and the next question is if it does, what it converges to. Our answer to the first question is positive under mild conditions; and the second question is, under filter stability conditions, that the convergence is to near optimality with an explicit error bound between the performance loss and the memory window size.

{\bf On finite-memory approximations and relations with controlled filter stability.} In our paper, we will see, perhaps not surprisingly, that filter stability is an essential ingredient for the learning algorithm to arrive at optimal or near optimal solutions. In other words, how fast the process forgets its initial prior distribution when updated with the information variables will be a key aspect for the performance of the approximate Q values determined using most recent information variables. Unlike fully observed systems, the system (belief-MDP) states cannot be visited infinitely often for POMDPs since there are uncountably many belief states and the measurements collected should somehow present approximate information on the belief states through conditions related to filter stability. We will make this intuition precise in our paper. We also note that in optimal control theory, it is a standard result that (time-invariant) output feedback control performs poorly compared with state-feedback and, in the absence of observability, this holds for all memory lengths.

We end the literature review section by mentioning particularly related studies on finite-memory control for POMDPs.
Reference \cite{white1994finite} is a particularly related work that studies approximation techniques for POMDPs using finite memory with finite state, action, measurements. The POMDP is reduced to a belief MDP and the worst and best case predictors prior to the $N$ most recent information variables are considered to build an approximate belief MDP. The original value function is bounded using these approximate belief MDPs that use only finite memory, where the finiteness of the state space is critically utilized. Furthermore, a loss bound is provided for a suboptimally constructed policy that only uses finite history, where the bound depends on a specific ergodicity coefficient (which requires restrictive sample path contraction properties). In this paper, we will consider more general signal spaces and consider more relaxed filter stability requirements, and, in particular, establish explicit rates of convergence results. We also rigorously construct the finite belief MDP considering the approximate Q learning algorithm whereas \cite{white1994finite} only focuses on the approximation aspect of POMDPs. 

In \cite{yu2008near}, the authors study near optimality of finite window policies for average cost problems where the state, action and observation spaces are finite; under the condition that the liminf and limsup of the average cost are equal and independent of the initial state, the paper establishes the near-optimality of (non-stationary) finite memory policies. Here, a concavity argument building on \cite{Feinberg2} (which becomes consequential by the equality assumption) and the finiteness of the state space is crucial. The paper shows that for any given $\epsilon>0$, there exists an $\epsilon$-optimal finite window policy. However, the authors do not provide a performance bound related to the length of the window, and in fact the proof method builds on convex analysis. %Nonetheless, the constant property of the value functions over initial priors is related to unique ergodicity, and thus the stability problem of non-linear filters, which is a topic of current investigation particularly in the controlled setup. 

In a recent paper \cite{kara2020near}, we established near optimality of finite window policies using a different approach by considering the belief-MDP directly and quantizing the belief space with a nearest neighbor map (under a metric on probability measures which induces the weak convergence topology) that uses finite window information variables. In particular, the results in that paper did not establish the convergence of a Q-learning algorithm and strictly speaking required the knowledge of the belief state to choose the nearest element from the finite set. As we will see later, the approximate Q learning algorithm does not necessarily choose the nearest element from the finite set induced by the window information variables. Thus, in this paper, we explicitly only use the memory variables directly for the approximation. We also note that the approximation method presented in \cite{kara2020near} only works for a restricted values of the discount factor which depends on the system components and the filter stability terms, whereas the method used in this paper does not put any restrictions on the discount factor. On the other hand, in  \cite{kara2020near} one could relax filter stability to be under weak convergence; in our current paper we consider filter stability under total variation. A detailed comparison is reported in Remark \ref{comparison}.

Similar with \cite{kara2020near}, our analysis here also makes explicit connections with filter stability; that is, how fast the controlled process forgets its initial distribution as it observes the information variables from the system. In the literature, there are various set of assumptions to achieve filter stability. Two main approaches have been:
\begin{itemize}
\item The transition kernel is sufficiently {\it ergodic}, forgetting the initial
measure and therefore passing this insensitivity (to incorrect initializations)
on to the filter process. This condition is often tailored towards control-free models.
\item The measurement channel provides sufficient information about the underlying state, allowing the filter to track the true state process. This approach is typically based on martingale methods and accordingly does not often lead to rates of convergence for the filter stability problem, but only asymptotic filter stability.
\end{itemize}
We use the recent results in the controlled filter stability literature presented in \cite{mcdonald2020exponential} for exponential filter stability and \cite{MYCDC2019observability,MYRobustControlledFS} for asymptotic filter stability.

In a recent study \cite{golowich2022planning}, a finite-memory based approximate planing method is studied for POMDPs, and the relation between the performance of the approximation and the filter stability is established similar to this paper and \cite{kara2020near}. To achieve filter stability, a restrictive rank condition is used for the observation channel, and a polynomial convergence rate is achieved as opposed to the general filter stability setup we consider here, which includes both exponential filter stability or asymptotic filter stability conditions. The approach in \cite{golowich2022planning} to deal with the filter stability is specifically tailored towards finite state spaces, whereas we present results for possibly continuous state spaces and our analysis in approximation is explicit for any filter stability error of the form given in $L_t$ (see equation (\ref{loss_constant})) and the setup in \cite{golowich2022planning} can be viewed as a particular instance where the state and action spaces are finite and the measurement channel has a restrictive invertibility condition. Furthermore, here we also present a reinforcement learning algorithm using finite-memory variables. We also emphasize that explicit filter stability conditions are provided in \cite[Theorem 3.3]{mcdonald2020exponential} for exponential filter stability and \cite[Theorem 3.6]{MYRobustControlledFS}  for asymptotic filter stability (the latter, via the examples in  \cite[Section 3]{mcdonald2018stability}, also includes a rank condition for finite models).

We highlight that one key contribution of the paper is the construction of an alternative belief MDP reduction introduced in Section \ref{alt_belief} which provides a structure to the finite-memory approximations. The alternative reduction technique leads to an explicit and rigorous error analysis by changing the topology and the construction of the state space of the reduced model with no restrictions on the discount parameter $\beta \in (0,1)$, unlike \cite{kara2020near}.

{\bf Contributions.} 
\begin{itemize}
\item[(i)] In Section \ref{alt_section}, we provide an alternative belief MDP reduction method that is tailored towards finite-memory approaches. In Section \ref{app_section}, we construct an approximate model using the alternative belief MDP reduction (see Figure \ref{FigFiniteWMDP}). In particular, in Theorem \ref{cont_bound} and Theorem \ref{robust_bound}, we establish bounds for the difference between the value functions of the original POMDP model and the approximate model, and  for the performance loss of the policy obtained using the approximate belief-MDP when it is used in the original model. We show that the policy obtained using the approximate model, uses finite-memory feedback variables to choose the control actions. Furthermore, Theorem \ref{cont_bound} and Theorem \ref{robust_bound} reveal the close connection between the finite-memory approximation method and the controlled filter stability problem through a filter stability term $L_t$ defined in (\ref{loss_constant}).
\item[(ii)] In Section \ref{q_section}, we present a Q-learning algorithm that uses finite-memory feedback variables. In Theorem \ref{main_thm}, we show that the Q iterations constructed using finite history variables converge under mild ergodicity assumptions on the hidden state process, and the limit fixed point equation corresponds to the optimal solution for the approximate belief-MDP model introduced in Section \ref{app_section}. 
\item[(iii)] We finally, in Section \ref{secfilterStable}, provide a particular result to guarantee exponential stability for controlled filter problems, which in turns implies that the error resulting from the finite-memory approximation and learning methods decays to $0$ exponentially fast as the memory size increases under explicit filter stability conditions to be presented (Corollary \ref{cor1} and Corollary \ref{cor2}).
\end{itemize}

In Section \ref{num_study}, we provide numerical examples which verify both the Q-learning convergence and near-optimality results.

%\sy{Ali, we haven't discussed the issue of starting the process either after time $N$, or, fixing the future and optimizing the first $N$ time stages. This is only noted in passing in Theorem \ref{main_thm} but this is not convincing as it is not discussed at all; this should be done in fact in the context of Theorem \ref{cont_bound}. Even a remark will be sufficient. A referee may reject the paper due to this issue. Recall that we had also observed that if we optimize the first $N$ time stages while using the resulting policy for after time $N$, we will obtain an even better bound. }

\section{Partially Observed Markov Decision Processes and Belief-MDP Reduction}
%In this paper, we study the performance of control policies that use a limited history information in partially observed stochastic control problems.
%We study continuity properties of single and multi stage partially observed stochastic control problems with respect to initial probability distributions and applications of these results to the study of robustness of control policies applied to systems with incomplete probabilistic models. We also briefly discuss the case when the uncertainty is on the system dynamics where we look at the continuity properties with respect to the transition kernel of the system.

Let $\mathds{X} \subset \mathds{R}^m$ denote a Borel set which is the state space of a partially observed controlled Markov process for some $m\in\mathds{N}$. Here and throughout the paper $\Zplus$ denotes the set of non-negative
integers and $\mathds{N}$ denotes the set of positive integers. Let
$\mathds{Y}$ be a finite set denoting the observation space of the model, and let the state be observed through an
observation channel $O$. The observation channel, $O$, is defined as a stochastic kernel (regular
conditional probability) from  $\mathds{X}$ to $\mathds{Y}$, such that
$O(\,\cdot\,|x)$ is a probability measure on the power set $P(\mathds{Y})$ of $\mathds{Y}$ for every $x
\in \mathds{X}$, and $O(A|\,\cdot\,): \mathds{X}\to [0,1]$ is a Borel
measurable function for every $A \in P(\mathds{Y})$.  A
decision maker (DM) is located at the output of the channel $O$, and hence it only sees the observations $\{Y_t,\, t\in \Zplus\}$ and chooses its actions from $\mathds{U}$, the action space which is also a finite set. An {\em admissible policy} $\gamma$ is a
sequence of control functions $\{\gamma_t,\, t\in \Zplus\}$ such
that $\gamma_t$ is measurable with respect to the $\sigma$-algebra
generated by the information variables
$
I_t=\{Y_{[0,t]},U_{[0,t-1]}\}, \quad t \in \mathds{N}, \quad
  \quad I_0=\{Y_0\},
$
where
\begin{equation}
\label{eq_control}
U_t=\gamma_t(I_t),\quad t\in \Zplus,
\end{equation}
are the $\mathds{U}$-valued control
actions and 
$Y_{[0,t]} = \{Y_s,\, 0 \leq s \leq t \}, \quad U_{[0,t-1]} =
  \{U_s, \, 0 \leq s \leq t-1 \}.$

\noindent We define $\Gamma$ to be the set of all such admissible policies. The update rules of the system are determined by (\ref{eq_control}) and the following
relationships:
\[  \Pr\bigl( (X_0,Y_0)\in B \bigr) =  \int_B \mu(dx_0)O(dy_0|x_0), \quad B\in \mathcal{B}(\mathds{X}\times\mathds{Y}), \]
where $\mu$ is the (prior) distribution of the initial state $X_0$, and
\begin{eqnarray*}
\label{eq_evol}
 \Pr\biggl( (X_t,Y_t)\in B \, \bigg|\, (X,Y,U)_{[0,t-1]}=(x,y,u)_{[0,t-1]} \biggr)
 = \int_B \mathcal{T}(dx_t|x_{t-1}, u_{t-1})O(dy_t|x_t),  
\end{eqnarray*}
$B\in \mathcal{B}(\mathds{X}\times\mathds{Y}), t\in \mathds{N},$ where $\mathcal{T}$ is the transition kernel of the model which is a stochastic kernel from $\mathds{X}\times
\mathds{U}$ to $\mathds{X}$. Note that, although $\mathds{Y}$ is finite, we here use integral sign instead of the summation sign for notation convenience by letting the measure to be sum of dirac-delta measures (and as we discuss later in the paper, our analysis will also hold for continuous measurement spaces).  We let the objective of the agent (decision maker) be the minimization of the infinite horizon discounted cost, 
  \begin{align}\label{criterion1}
    J_{\beta}(\mu,{\cal T},\gamma)= E_\mu^{{\cal T},\gamma}\left[\sum_{t=0}^{\infty} \beta^t c(X_t,U_t)\right]
  \end{align}
 \noindent for some discount factor $\beta \in (0,1)$, over the set of admissible policies $\gamma\in\Gamma$, where $c:\mathds{X}\times\mathds{U}\to\R$ is a Borel-measurable stage-wise cost function and $E_\mu^{{\cal T},\gamma}$ denotes the expectation with initial state probability measure $\mu$ and transition kernel ${\cal T}$ under policy $\gamma$. Note that $\mu\in\mathcal{P}(\mathds{X})$, where we let $\mathcal{P}(\mathds{X})$ denote the set of probability measures on $\mathds{X}$. We define the optimal cost for the discounted infinite horizon setup as a function of the priors and the transition kernels as
\begin{align*}
  J_{\beta}^*(\mu,{\cal T})&=\inf_{\gamma\in\Gamma} J_{\beta}(\mu,{\cal T},\gamma).
% \text{and }J_{\beta}^*(P,\cal{T})&=\inf_{\gamma\in\Gamma} J_{\beta}(P,\cal{T},\gamma)
\end{align*}
For the analysis of partially observed MDPs, a common approach is to reformulate the problem as a fully observed MDP, where the decision maker keeps track of the posterior distribution of the state $X_t$ given the available history $I_t$. In the following section, we formalize this approach.

\subsection{Reduction to fully observed models using belief states}
\subsubsection{Convergence notions for probability measures}
For the analysis of the technical results, we will use different notions of convergence for sequences of probability measures.

Two important notions of convergence for sequences of probability measures are weak convergence, and convergence under total variation. For a complete, separable and metric space $\mathds{X}$, for a sequence $\{\mu_n,n\in\N\}$ in $\mathcal{P}(\mathds{X})$ is said to converge to $\mu\in\mathcal{P}(\mathds{X})$ \emph{weakly} if $\int_{\mathds{X}}c(x)\mu_n(dx) \to \int_{\mathds{X}}c(x)\mu(dx)$ for every continuous and bounded $c:\mathds{X} \to \R$.
One important property of weak convergence is that the space of probability measures on a complete, separable, metric (Polish) space endowed with the topology of weak convergence is itself complete, separable, and metric \cite{Par67}. One such metric is the bounded Lipschitz metric  (\cite[p.109]{villani2008optimal}), which is defined for $\mu,\nu \in \P(\mathds{X})$ as 
\begin{equation}\label{BLmetric}
\rho_{BL}(\mu,\nu):=\sup_{\|f\|_{BL}\leq1} | \int f d\mu - \int f d\nu | 
\end{equation}
where \[ \|f\|_{BL}:=\|f\|_\infty+\sup_{x\neq y}\frac{|f(x)-f(y)|}{d(x,y)} \]
and $\|f\|_\infty=\sup_{x\in\mathds{X}}|f(x)|$.

  For probability measures $\mu,\nu \in \mathcal{P}(\mathds{X})$, the \emph{total variation} metric is given by
  \begin{align*}
    \|\mu-\nu\|_{TV}&=2\sup_{B\in\mathcal{B}(\mathds{X})}|\mu(B)-\nu(B)|=\sup_{f:\|f\|_\infty \leq 1}\left|\int f(x)\mu(\dd x)-\int f(x)\nu(\dd x)\right|,
  \end{align*}
  \noindent where the supremum is taken over all measurable real $f$ such that $\|f\|_\infty=\sup_{x\in\mathds{X}}|f(x)|\leq 1$. A sequence $\mu_n$ is said to converge in total variation to $\mu \in \mathcal{P}(\mathds{X})$ if $\|\mu_n-\mu\|_{TV}\to 0$.

\subsubsection{Construction of the belief-MDP and some regularity properties}
It is by now a standard result that, for optimality analysis, any POMDP can be reduced to a completely observable Markov decision process \cite{Yus76}, \cite{Rhe74}, whose states are the posterior state distributions or {\it beliefs} of the observer or the filter process; that is, the state at time $t$ is
\begin{align}\label{belief_state}
z_t:=Pr\{X_{t} \in \,\cdot\, | Y_0,\ldots,Y_t, U_0, \ldots, U_{t-1}\} \in \P(\sX). 
\end{align}
We call this equivalent process the filter process \index{Belief-MDP}. The filter process has state space $\mathcal{Z} = \P(\sX)$ and action space $\sU$. Here, $\mathcal{Z}$ is equipped with the Borel $\sigma$-algebra generated by the topology of weak convergence \cite{Bil99}. Under this topology, $\mathcal{Z}$ is a standard Borel space \cite{Par67}. 
%The transition probability of the filter process can be constructed as follows: the joint conditional probability on next state and observation variables given the current control action and the current state of the filter process is given by
%\begin{align}\label{r_kernel1}
%R(B\times C|u_0,z_0) = \int_{\mathds{X}} \int_B Q(C|x_1,u_0)\mathcal{T}(dx_1|x_0,u_0)z_0(dx_0),
%\end{align}
%for all $B \in \B(\mathds{X})$ and $C \in \B(\mathds{Y})$. Then, the conditional distribution of the next observation variable given the current state of the filter process and the current control action is given by
%\begin{align*}
%P(C|u_0,z_0) = \int_{\mathds{X}} \int_{\mathds{X}} Q(C|x_1,u_0)\mathcal{T}(dx_1|x_0,u_0)z_0(dx_0),
%\end{align*}
%for all $C\in \B(\mathds{Y})$. Using this, we can disintegrate $R$ (see \cite[Proposition 7.27]{bertsekas78}) as follows:
%\begin{align}\label{r_kernel2}
%&R(B\times C|u_0,z_0) = \int_C F(B|y_1,u_0,z_0) P(dy_1|u_0,z_0) \nonumber \\
%&\phantom{xxxxxxxxxxxxxxxxxxxxxxxxx}=\int_C z_1(y_1,u_0,z_0)(B) P(dy_1|u_0,z_0),
%\end{align}
%where $F$ is a stochastic kernel from ${\cal P}(\mathds{X})\times \mathds{Y}\times\mathds{U}$ to $\mathds{X}$ and the posterior distribution of $x_1$, determined by the kernel $F$, is the state variable $z_1$ of the filter process.
%Using (\ref{r_kernel1}) and (\ref{r_kernel2}) and by defining $\mathcal{T}(dx_1|z_0,u_0):=\int_{\mathds{X}}\mathcal{T}(dx_1|x_0,u_0)z_0(dx_0)$, we can write
%\begin{align}\label{set_eq}
%\int_B Q(C|x_1)\mathcal{T}(dx_1|z_0,u_0)=\int_C z_1(y_1,u_0,z_0)(B) P(dy_1|u_0,z_0).
%\end{align}
Then, the transition probability $\eta$ of the filter process can be constructed as follows (see also \cite{Her89}). If we define the measurable function 
\[F(z,u,y) := F(\,\cdot\,|z,u,y) = Pr\{X_{t+1} \in \,\cdot\, | Z_t = z, U_t = u, Y_{t+1} = y\}\]
 from ${\cal P}(\mathds{X})\times\mathds{U}\times\mathds{Y}$ to ${\cal P}(\mathds{X})$ and use the stochastic kernel $P(\,\cdot\, | z,u) = \Pr\{Y_{t+1} \in \,\cdot\, | Z_t = z, U_t = u\}$ from ${\cal P}(\mathds{X})\times\mathds{U}$ to $\mathds{Y}$, we can write $\eta$ as
\begin{align}
\eta(\,\cdot\,|z,u) = \int_{\mathds{Y}} 1_{\{F(z,u,y) \in \,\cdot\,\}} P(dy|z,u). \label{beliefK}
\end{align}
%that is, $\eta(\,\cdot\,|z,u)$ is an uncountable mixture of the probability measures $\{F(\,\cdot\,|y,u,z)\}_{y\in \mathds{Y}}$.

The one-stage cost function $\tilde{c}:{\cal P}(\mathds{X}) \times \mathds{U}\rightarrow[0,\infty)$ of the filter process is given by 
\begin{align}\label{belief_cost}
\tilde{c}(z,u) := \int_{\sX} c(x,u) z(dx),
\end{align}
which is a Borel measurable function. Hence, the filter process is a completely observable Markov process with the components $(\mathcal{Z},\sU,\tilde{c},\eta)$.

For the filter process, the information variables is defined as
\[
\tilde{I}_t=\{Z_{[0,t]},U_{[0,t-1]}\}, \quad t \in \mathds{N}, \quad
  \quad \tilde{I}_0=\{Z_0\}.
\]

It is well known that an optimal control policy of the original POMDP can use the belief $Z_t$ as a sufficient statistic for optimal policies (see \cite{Yus76}, \cite{Rhe74}), provided they exist. More precisely, the filter process is equivalent to the original POMDP  in the sense that for any optimal policy for the filter process, one can construct a policy for the original POMDP which is optimal. On existence, we note the following. 

By the recent results in \cite{FeKaZg14} and \cite{KSYWeakFellerSysCont} the transition model of the belief-MDP can be shown to satisfy weak continuity conditions on the belief state and action variables, and accordingly we have that the measurable selection conditions \cite[Chapter 3]{HernandezLermaMCP} apply.  Notably, we state the following.

\begin{assumption}\label{TV_channel}
\begin{itemize}
\item[(i)] The transition probability $\mathcal{T}(\cdot|x,u)$ is weakly continuous in $(x,u)$, i.e., for any $(x_n,u_n)\to (x,u)$, $\mathcal{T}(\cdot|x_n,u_n)\to \mathcal{T}(\cdot|x,u)$ weakly.
\item[(ii)] The observation channel $O(\cdot|x)$ is continuous in total variation, i.e., for any $x_n\to x$, $O(\cdot|x_n) \rightarrow O(\cdot|x)$ in total variation.
\end{itemize}
\end{assumption}

\begin{assumption}\label{TV_kernel}
The transition probability $\mathcal{T}(\cdot|x,u)$ is continuous in total variation in $(x,u)$, i.e., for any $(x_n,u_n)\to (x,u)$, $\mathcal{T}(\cdot|x_n,u_n) \to \mathcal{T}(\cdot|x,u)$ in total variation.
\end{assumption}

\begin{theorem} 
\begin{itemize}
\item[(i)] \cite{FeKaZg14} \label{TV_channel_thm}
Under Assumption \ref{TV_channel}, the transition probability $\eta(\cdot|z,u)$ of the filter process is weakly continuous in $(z,u)$.
\item[(ii)] \cite{KSYWeakFellerSysCont} \label{TV_kernel_thm}
Under Assumption \ref{TV_kernel}, the transition probability $\eta(\cdot|z,u)$ of the filter process is weakly continuous in $(z,u)$.
\end{itemize}
\end{theorem}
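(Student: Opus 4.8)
The plan is to verify weak continuity of $\eta$ directly from its integral representation $(\ref{beliefK})$. Since $\mathds{Y}$ is finite, for any $g\in C_b(\mathcal{P}(\mathds{X}))$ we have $\int g\, d\eta(\cdot|z,u)=\sum_{y\in\mathds{Y}}g\bigl(F(z,u,y)\bigr)\,P(y|z,u)$, so it suffices to show that whenever $(z_n,u_n)\to(z,u)$ (weak convergence of the beliefs, and, $\mathds{U}$ being finite, $u_n=u$ for all large $n$) we have, for each fixed $y$,
\[
P(y|z_n,u_n)\to P(y|z,u)\quad\text{and}\quad F(z_n,u_n,y)\to F(z,u,y)\text{ weakly whenever }P(y|z,u)>0 .
\]
The terms with $P(y|z,u)=0$ contribute nothing in the limit since $g$ is bounded and $P(y|z_n,u_n)\to 0$ there (so $F(z,u,y)$, which is only defined arbitrarily on that set, never actually enters), while for the remaining $y$ the continuity of $g$ combined with the two displayed convergences yields $g(F(z_n,u_n,y))P(y|z_n,u_n)\to g(F(z,u,y))P(y|z,u)$; summing over the finite set $\mathds{Y}$ gives the claim.

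The intermediate object is the one-step predictor $z^-(\,\cdot\,):=\int_{\mathds{X}}\mathcal{T}(\,\cdot\,|x,u)\,z(dx)$, in terms of which the filtering equations read $P(y|z,u)=\int_{\mathds{X}}O(y|x)\,z^-(dx)$ and $F(z,u,y)(dx')=O(y|x')\,z^-(dx')\big/P(y|z,u)$. For part (i) I would first note that $z\mapsto z^-$ is weakly continuous: for $f\in C_b(\mathds{X})$ the map $x\mapsto\int f\,d\mathcal{T}(\cdot|x,u)$ is bounded and, by weak continuity of $\mathcal{T}$ (Assumption \ref{TV_channel}(i)), continuous, so $\int f\,dz_n^- = \int\bigl(\int f\,d\mathcal{T}(\cdot|x,u)\bigr)z_n(dx)\to\int f\,dz^-$. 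Next, total-variation continuity of $O$ (Assumption \ref{TV_channel}(ii)) is equivalent, $\mathds{Y}$ being finite, to $O(y|\cdot)\in C_b(\mathds{X})$ for every $y$; hence both $O(y|\cdot)$ and $f\,O(y|\cdot)$ lie in $C_b(\mathds{X})$, and testing the weak convergence $z_n^-\to z^-$ against these functions gives $P(y|z_n,u_n)\to P(y|z,u)$ and, after dividing by the positive limit $P(y|z,u)$, $\int f\,dF(z_n,u_n,y)\to\int f\,dF(z,u,y)$, i.e. the required weak convergence of the filter updates.

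For part (ii) the roles are rearranged: $O$ is now only measurable, but $\mathcal{T}$ is total-variation continuous, which is exactly what lets us absorb $O$ into the $\mathcal{T}$-integral. Writing $P(y|z,u)=\int_{\mathds{X}}h_y(x,u)\,z(dx)$ with $h_y(x,u):=\int_{\mathds{X}}O(y|x')\,\mathcal{T}(dx'|x,u)$, and $\int f\,dF(z,u,y)=\frac{1}{P(y|z,u)}\int_{\mathds{X}}g_{f,y}(x,u)\,z(dx)$ with $g_{f,y}(x,u):=\int_{\mathds{X}}f(x')O(y|x')\,\mathcal{T}(dx'|x,u)$ for $f\in C_b(\mathds{X})$, the estimate $|\int\phi\,d(\mu-\nu)|\le\|\phi\|_\infty\|\mu-\nu\|_{TV}$ applied with $\phi=O(y|\cdot)$ and with $\phi=f\,O(y|\cdot)$ (both bounded by $\|f\|_\infty$ since $0\le O(y|\cdot)\le 1$) shows that $h_y$ and $g_{f,y}$ are bounded and continuous on $\mathds{X}\times\mathds{U}$. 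Weak convergence $z_n\to z$ then yields $P(y|z_n,u_n)\to P(y|z,u)$ and $\int f\,dF(z_n,u_n,y)\to\int f\,dF(z,u,y)$, and one concludes exactly as in part (i).

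I expect the only genuinely delicate point to be the bookkeeping at beliefs where the innovation probability degenerates, i.e. ensuring that the $y$ with $P(y|z,u)=0$ are harmless despite $F(z,u,y)$ being undefined there; this is dispatched by the boundedness of $g$ as indicated in the first paragraph. Everything else is a routine transfer of continuity along the composition $z\mapsto z^-\mapsto\bigl(P(\cdot|z,u),F(z,u,\cdot)\bigr)$, using weak continuity of $\mathcal{T}$ together with $C_b$-membership of $O(y|\cdot)$ in part (i), and using total-variation continuity of $\mathcal{T}$ to dominate the merely measurable $O$ in part (ii).
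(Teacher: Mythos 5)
Your argument is correct, but it is worth noting that the paper itself does not prove this theorem: it is quoted from \cite{FeKaZg14} and \cite{KSYWeakFellerSysCont}, where it is established for general Borel (in particular uncountable) observation spaces. What you give is a self-contained special-case proof that leans heavily on the standing assumptions that $\mathds{Y}$ and $\mathds{U}$ are finite: finiteness of $\mathds{U}$ lets you freeze $u_n=u$, and finiteness of $\mathds{Y}$ lets you write $\int g\,d\eta(\cdot|z,u)=\sum_{y}g(F(z,u,y))P(y|z,u)$ and argue termwise, with the degenerate $y$'s (those with $P(y|z,u)=0$) killed by boundedness of $g$ together with $P(y|z_n,u_n)\to 0$ --- a point you correctly identify and handle. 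Within that setting both parts are sound: in (i) the composition $z\mapsto z^-$ is weakly continuous by weak Feller-ness of $\mathcal{T}$, and total-variation continuity of $O$ with finite $\mathds{Y}$ is exactly pointwise continuity of each $O(y|\cdot)$, so $O(y|\cdot)$ and $fO(y|\cdot)$ are legitimate test functions; in (ii) you correctly trade regularity, absorbing the merely measurable $O$ into the kernel via $h_y(x,u)=\int O(y|x')\mathcal{T}(dx'|x,u)$ and $g_{f,y}$, whose continuity follows from the bound $|\int\phi\,d(\mu-\nu)|\le\|\phi\|_\infty\|\mu-\nu\|_{TV}$. What the cited proofs buy beyond yours is generality: with uncountable $\mathds{Y}$ the sum becomes an integral, the null set $\{y:P(y|z,u)=0\}$ can no longer be treated term by term, and one needs convergence of the joint laws of $(Y_{t+1},X_{t+1})$ together with generalized dominated-convergence arguments; this matters for the paper's later remark that the analysis extends to continuous measurement spaces, where your sum-based argument would have to be redone along those lines. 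For the finite-$\mathds{Y}$, finite-$\mathds{U}$ model actually analyzed in the paper, your elementary route is complete and arguably more transparent.
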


Under the above weak continuity conditions and appropriate conditions on the stage-wise cost function (e.g. bounded and continuous $c$ with Assumption \ref{TV_channel_thm} or bounded $c$ with Assumption \ref{TV_kernel_thm}), the measurable selection conditions \cite[Chapter 3]{HernandezLermaMCP} apply and a solution to the discounted cost optimality equation exists, and accordingly an optimal control policy exists.

 This policy is stationary (in the belief state). If we denote this optimal belief policy by $\phi : \P(\mathds{X}) \to \mathds{U}$, we can then find a policy $\gamma$ on the partially observed setup such that
\begin{align*}
\gamma(y_{[0,n]}):=\phi\left(P^{\mu,\gamma}(X_n\in\cdot|Y_{[0,n]}=y_{[0,n]})\right)=\phi(\pi_n^{\mu,\gamma}).
\end{align*}
Hence, the policy $\gamma$ can be used as an optimal policy for the partially observed MDP.
%In particular, we have that $J_\beta^*(\mu,\mathcal{T})=J_\beta^*(\mu,\eta)$. This will be the case in our paper, under the assumptions we will work with.

%For the rest of the paper, we will use $\gamma$ for the belief process policy $\Phi$ for consistency of notation. The next result provides further regularity properties on the transition model for the belief model under mild conditions on the fully observed model

Even though, the belief MDP approach provides a strong tool for the analysis of POMDPs, it is usually too complicated computationally. The belief space $\mathcal{Z}=\P(\mathds{X})$ is always uncountable even when $\mathds{X}$, $\mathds{Y}$
 and  $\mathds{U}$ are finite. Furthermore, the the information variables $I_t$ grows with time and the computation of the belief state $Pr(X_t \in \cdot | I_t)$ can become intractable. Therefore, approximation of the belief-MDP is usually needed. In the following section, we provide an alternative fully observed MDP approach and present approximation results that only make use of a finite history of the information variables.

\section{An Alternative Finite Window Belief-MDP Reduction and its Approximation}\label{alt_belief}

\subsection{An alternative finite window belief-MDP reduction}\label{alt_section}

In this section we construct an alternative fully observed MDP reduction with the condition that the controller has observed at least $N$ information variables, using the predictor from $N$ stages earlier and the most recent $N$ information variables (that is, measurements and actions). This new construction allows us to highlight the most recent information variables and {\it compress} the information coming from the past history via the predictor as a probability measure valued variable. In what follows, we will sometimes consider the case with $N=1$ for some of the proofs to make the presentation less complicated. The general case follows from identical arguments.

For the remainder of the paper, to emphasize the prior distribution of the starting state variable, we will use the following notation for conditional probabilities on state and observation variables.
\begin{definition}
Assume that the initial state $X_0$ has a prior distribution $\mu\in\P(\mathds{X})$. Then, for the conditional distribution of $X_t$ given the past observation and action variables $\{y_t,\dots,y_0\}$, $\{u_{t-1},\dots,u_0\}$ we define
\begin{align*}
P^\mu(X_t\in\cdot|y_t,\dots,y_0,u_{t-1},\dots,u_0):=Pr(X_t\in\cdot|y_t,\dots,y_0,u_{t-1},\dots,u_0).
\end{align*}
Given that $x_0$ has a prior distribution $\mu\in\P(\mathds{X})$, we define the following for the conditional distribution of $Y_t$ given the past observation and action variables $\{y_{t-1},\dots,y_0\}$, $\{u_{t-1},\dots,u_0\}$ 
\begin{align*}
P^\mu(Y_t\in\cdot|y_{t-1},\dots,y_0,u_{t-1},\dots,u_0):=Pr(Y_t\in\cdot|y_{t-1},\dots,y_0,u_{t-1},\dots,u_0).
\end{align*}
\end{definition}

Consider the following state variable at time $t$:
\begin{align}\label{finite_belief_state}
\hat{z}_t=(\pi_{t-N}^-,I_t^N)
\end{align}
where, for $N\geq 1$
\begin{align*}
\pi_{t-N}^-&=Pr(X_{t-N}\in \cdot|y_{t-N-1},\dots,y_0,u_{t-N-1},\dots,u_0),\\
I_t^N&=\{y_t,\dots,y_{t-N},u_{t-1},\dots,u_{t-N}\}
\end{align*}
and $I_t^N=y_t$ for $N=0$
with $\mu$ being the prior probability measure on $X_0$. The state space with this representation is $\hat{\mathcal{Z}}=\P(\mathds{X})\times \mathds{Y}^{N+1}\times \mathds{U}^{N}$ where we equip $\hat{\mathcal{Z}}$ with the product topology where we consider the weak convergence topology on the $\P(\mathds{X})$ coordinate and the usual (coordinate) topologies on $\mathds{Y}^{N+1}\times \mathds{U}^{N}$ coordinates.

This new state representation can be mapped to the belief state $z_t$ defined in (\ref{belief_state}). Consider the map $\psi: \hat{\mathcal{Z}}\to \P(\mathds{X})$, for some $\hat{z}_t=(\pi_{t-N}^-,I_t^N)$
\begin{align*}
\psi(\hat{z}_t)=\psi(\pi_{t-N}^-,I_t^N)&=P^{\pi_{t-N}^-}(X_t\in\cdot|I_t^N)=P^{\pi_{t-N}^-}(X_t\in\cdot|y_t,\dots,y_{t-N},u_{t-N-1},\dots,u_{t-N-1})\\
&=P^\mu(X_t\in\cdot|y_t,\dots,y_0,u_{t-1},\dots,u_0)=z_t
\end{align*}
such that the map $\psi$ acts as a Bayesian update of $\pi_{t-N}^-$ using $I_t^N$. Using this map, we can define the stage-wise cost function and the transition probabilities. Consider the new cost function $\hat{c}:\hat{\mathcal{Z}}\times \mathds{U}\to \mathds{R}$, using the cost function $\tilde{c}$ of the belief MDP (defined in (\ref{belief_cost})) such that 
\begin{align}\label{hat_cost}
\hat{c}(\hat{z}_t,u_t)&=\hat{c}(\pi_{t-N}^-,I_t^N,u_t)=\tilde{c}(\psi(\pi_{t-N}^-,I_t^N),u_t)\nonumber\\
&=\int_\mathds{X}c(x_t,u_t)P^{\pi^-_{t-N}}(dx_t|y_t,\dots,y_{t-N},u_{t-1},\dots,u_{t-N}).
\end{align}
Furthermore, we can define the transition probabilities as follows: for some $A\in \B(\hat{Z})$ such that 
\[\ A= B\times\{\hat{y}_{t-N+1},\hat{u}_{t},\dots,\hat{u}_{t-N+1}\},\quad  B\in\B(\P(\mathds{X})) \]
we write
\begin{align*}
&Pr(\hat{z}_{t+1}\in A|\hat{z}_{t},\dots,\hat{z}_0,u_{t},\dots,u_0) \\
&=Pr(\pi_{t-N+1}^-\in B,\hat{y}_{t+1},\dots,\hat{y}_{t-N+1},\hat{u}_{t},\dots,\hat{u}_{t-N+1}|\pi_{t-N}^-,\dots,\pi_0^-,y_t,\dots,y_0,u_t,\dots,u_0)\\
&=\mathds{1}_{\{\left(y_{t},\dots,y_{t-N+1},u_{t},\dots,u_{t-N+1}\right)=\left(\hat{y}_{t},\dots,\hat{y}_{t-N+1},\hat{u}_{t},\dots,\hat{u}_{t-N+1}\right)\}}\\
&\qquad\qquad\times \mathds{1}_{\{G(\pi_{t-N}^-,y_{t-N},u_{t-N})\in B\}}P^{\pi_{t-N}^-}(\hat{y}_{t+1}|y_t,\dots,y_{t-N},u_{t},\dots,u_{t-N})\\
&=Pr(\pi_{t-N+1}^-\in B,\hat{y}_{t+1},\dots,\hat{y}_{t-N+1},\hat{u}_{t},\dots,\hat{u}_{t-N+1}|\pi_{t-N}^-,y_t,\dots,y_{t-N},u_t,\dots,u_{t-N})\\
&=Pr(\hat{z}_{t+1}\in A|\hat{z}_t,u_t)\\
&=:\int_A\hat{\eta}(d\hat{z}_{t+1}|\hat{z}_t,u_t)
\end{align*}
where the map $G$ is defined as 
\begin{align*}
&G(\pi_{t-N}^-,y_{t-N},u_{t-N})=G(P^{\mu}(X_{t-N}\in\cdot|y_{t-N-1},\dots,y_0,u_{t-N-1},\dots,u_0),y_{t-N},u_{t-N})\\
&=P^\mu(X_{t-N+1}\in\cdot|y_{t-N},\dots,y_0,u_{t-N},\dots,u_0).
\end{align*}

Hence, $\hat{\eta}$ defines a controlled transition model for the new states $\hat{z}_{t+1}\in \hat{\mathcal{Z}}$. Then, we have a proper fully observed MDP, with the cost function $\hat{c}$, transition kernel $\hat{\eta}$ and the state space $\hat{\mathcal{Z}}$.

Note that any policy $\phi:\P(\mathds{X})\to \mathds{U}$ defined for the belief MDP, can be extended to the newly defined finite window belief-MDP using the map $\psi$, and defining $\hat{\phi}:=\phi\circ\psi$ such that 
\begin{align*}
\hat{\phi}(\hat{z})=\phi(\psi(z)).
\end{align*}
Thus, if an optimal policy can be found for the belief MDP, say $\phi^*$, the policy $\hat{\phi}^*=\phi^*\circ\psi$ is an optimal policy for the newly defined MDP. 

We now write the discounted cost optimality equation for the newly constructed finite window belief MDP. Note that with the alternative approach the state $\hat{z}$ can only be written, if we have at least $N$ information variables. Therefore, given that the decision maker observed at least $N$ information variables, we write the following fixed point equation
\begin{align*}
J^*_\beta(\hat{z})&=\min_{u\in\mathds{U}}\left(\hat{c}(\hat{z},u)+\beta \int J_\beta^*(\hat{z}_1)\hat{\eta}(d\hat{z}_1|\hat{z},u)\right).
\end{align*}
We can rewrite this fixed point equation in a different form, for notation ease assume $N=1$.
If $\hat{z}$ has the form $(\pi_0^-,y_1,y_0,u_0)$, then we can rewrite 
\begin{align}\label{fixed_wind}
&J_\beta^*(\pi_0^-,y_1,y_0,u_0)\nonumber\\
&=\min_{u_{1}\in\mathds{U}}\bigg(\hat{c}(\pi_0^-,y_1,y_0,u_0,u_1)+\beta \sum_{y_2\in\mathds{Y}} J_\beta^*(\pi_1^-(\pi_0^-,y_0,u_0),y_{2},y_1,u_{1})P^{\pi_0^-}(y_{2}|y_1,y_0,u_1,u_0)\bigg).
\end{align}
This representation will play an important role in the analysis of the problem. Note that the policy $\hat{\phi}^*=\phi^*\circ \psi$ satisfies this fixed point equation.

The following fixed point equation can also be defined for any policy $\hat{\phi}:\hat{\mathcal{Z}}\to \mathds{U}$
\begin{align*}
J_\beta(\hat{z},\hat{\phi})=\hat{c}(\hat{z},\hat{\phi}(\hat{z}))+\beta \int J_\beta(\hat{z}_1,\hat{\phi})\hat{\eta}(d\hat{z}_1|\hat{z},\hat{\phi}(\hat{z}))
\end{align*}
where $J_\beta(\hat{z},\hat{\phi})$ denotes the value function under the policy $\hat{\phi}$ for the initial point $\hat{z}$.

\subsection{Approximation of the finite window belief-MDP}\label{app_section}
We now approximate the MDP constructed in the previous section. Consider the following set $\hat{\mathcal{Z}}_{\pi^*}^N$ for a fixed $\pi^*\in\P(\mathds{X})$
\begin{align}\label{finite_set_belief}
\hat{\mathcal{Z}}_{\pi^*}^N=\bigg\{(\pi^*,y_{[0,N]},u_{[0,N-1]}): y_{[0,N]}\in\mathds{Y}^{N+1}, u_{[0,N-1]}\in\mathds{U}^N\bigg\}
\end{align}
such that the state at time $t$ is $\hat{z}_t^N=(\pi^*,I_t^N)$. Compared to the state $\hat{z}_t=(\pi_{t-N}^-,I_t^N)$ defined in (\ref{finite_belief_state}), this approximate model uses $\pi^*$ as the predictor, no matter what the real predictor at time $t-N$ is.

The cost function is defined in usual manner so that 
\begin{align*}
\hat{c}(\hat{z}^N_t,u_t)&=\hat{c}(\pi^*,I_t^N,u_t)=\tilde{c}(\phi(\pi^*,I_t^N),u_t)\\
&=\int_\mathds{X}c(x_t,u_t)P^{\pi^*}(dx_t|y_t,\dots,y_{t-N},u_{t-1},\dots,u_{t-N}).
\end{align*}

We define the controlled transition model by 
\begin{align}\label{eta_N}
\hat{\eta}^N(\hat{z}_{t+1}^N|\hat{z}_t^N,u_t)=\hat{\eta}^N(\pi^*,I_{t+1}^N|\pi^*,I_t^N,u_t):=\hat{\eta}\bigg(\P(\mathds{X}),I_{t+1}^N|\pi^*,I_t^N,u_t\bigg).
\end{align}

For simplicity, if we assume $N=1$, then the transitions can be rewritten for some $I_{t+1}^N=(\hat{y}_{t+1},\hat{y}_t,\hat{u}_t)$ and $I_t^N=(y_t,y_{t-1},u_{t-1})$
\begin{align}\label{eta_N_1}
\hat{\eta}^N(\pi^*,\hat{y}_{t+1},\hat{y}_t,\hat{u}_t|\pi^*,y_t,y_{t-1},u_{t-1},u_t)&=\hat{\eta}(\P(\mathds{X}),\hat{y}_{t+1},\hat{y}_t,\hat{u}_t|\pi^*,y_t,y_{t-1},u_{t-1},u_t)\nonumber\\
&=\mathds{1}_{\{y_t=\hat{y}_t,u_t=\hat{u}_t\}}P^{\pi^*}(\hat{y}_{t+1}|y_t,y_{t-1},u_t,u_{t-1}).
\end{align} 

Denoting the optimal value function for the approximate model by $J_\beta^N$, we can write the following fixed point equation
\begin{align}\label{N_fixed}
J_\beta^N(\hat{z}^N)=\min_{u\in\mathds{U}}\left(\hat{c}(\hat{z}^N,u)+\beta\sum_{\hat{z}_1^N\in\hat{\mathcal{Z}}^N_{\pi^*}}J_\beta^N(\hat{z}_1^N)\hat{\eta}^N(\hat{z}_1^N|\hat{z}^N,u)\right).
\end{align}
By assuming $N=1$ again, we can rewrite the fixed point equation for some $\hat{z}_0^N=(\pi^*,y_1,y_0,u_0)$ as
\begin{align}\label{N_1_fixed}
J_\beta^N(\pi^*,y_1,y_0,u_0)=\min_{u_1\in\mathds{U}}\left(\hat{c}(\pi^*,y_1,y_0,u_0,u_1)+\beta\sum_{y_2\in\mathds{Y}}J_\beta^N(\pi^*,y_2,y_1,u_1)P^{\pi^*}(y_2|y_1,y_0,u_1,u_0)\right).
\end{align}

Since everything is finite in this setup, we can assume the existence of an optimal policy $\phi^N$ that satisfies this fixed point equation. Note that both $J^N_\beta$ and $\phi^N$ are defined on the finite set $\hat{\mathcal{Z}}^N_{\pi^*}$. However, we can simply extend them to the set $\hat{\mathcal{Z}}$ by defining
\begin{align*}
\tilde{J}^N_\beta(\hat{z})=\tilde{J}^N_\beta(\pi,y_1,y_0,u_0)&:=J^N_\beta(\pi^*,y_1,y_0,u_0)\\
\tilde{\phi}^N(\hat{z})=\tilde{\phi}^N(\pi,y_1,y_0,u_0)&:=\phi^N(\pi^*,y_1,y_0,u_0)
\end{align*}
for any $\hat{z}=(\pi,y_1,y_0,u_0)\in\hat{\mathcal{Z}}$.

We will later prove that Q-value iterations using finite window of information variables converge to the Q-values for the approximate model constructed in this section. For $N=1$, for example, equation (\ref{N_1_fixed}), will be significant for the Q-value iteration.

Another point to note is that the policy $\phi^N$ only uses most recent $N$ information variables to choose the control actions.

In what follows, we investigate the following differences
\begin{align*}
&|\tilde{J}^N_\beta(\hat{z})-J^*_\beta(\hat{z})|,\\
&J_\beta(\hat{z},\tilde{\phi}^N) -J^*_\beta(\hat{z}).
\end{align*}
The first one is the difference between the optimal value function of the original model and that for the approximate model. The second term is the performance loss due to the policy calculated for the approximate model being applied to the true model.

\begin{remark}
 We note that, in \cite{SaYuLi15c}, the authors study approximation methods for MDPs with continuous state spaces by quantizing the state space and constructing a finite state MDP. In this section, we also construct a finite state space, $\hat{\mathcal{Z}}_{\pi^*}^N$, by quantizing $\hat{\mathcal{Z}}$. In  \cite{SaYuLi15c}, continuity properties of the transition kernel, $\hat{\eta}$ are used. However, establishing regularity properties for $\hat{\eta}$ is challenging. Therefore, we follow a different approach and instead of working directly with $\hat{\eta}$, we analyze the components of partially observed MDP, for the following approximation results. We note that, our quantization method is tailored towards filter stability, and corresponds to a uniform quantization when we endow the finite window belief MDP space $\hat{\mathcal{Z}}=\P(\mathds{X})\times \mathds{Y}^{N+1}\times \mathds{U}^{N}$ with the product topology of the weak convergence topology on $\P(\mathds{X})$ and the usual (coordinate) topologies on $\mathds{Y}$ and $\mathds{U}$. We also note that our approach here then naturally applies to continuous (such as finite dimensional real valued) but compact space valued measurement and action spaces as well, as a uniform quantization can be applied for all finite window belief MDP realizations. \hfill $\diamond$
\end{remark}

\subsubsection{Difference in the value functions in terms of a uniform filter stability error}
In this section, we study the difference $|\tilde{J}^N_\beta(\hat{z})-J^*_\beta(\hat{z})|$.

Before the result, we introduce some notation. We first define the measurable policies with respect to the new state space $\hat{\mathcal{Z}}=\P(\mathds{X})\times \mathds{Y}^{N+1}\times \mathds{U}^{N}$ by $\hat{\Gamma}$. That is, a policy $\hat{\gamma}\in\hat{\Gamma}$ is a sequence of control functions $\{\hat{\gamma}_t,\, t\in \Zplus\}$ such
that $\hat{\gamma}_t$ is measurable with respect to the $\sigma$-algebra
generated by the information variables $\{\hat{z}_0,\dots,\hat{z}_t\}$.

We now define the following bounding term
\begin{align}\label{loss_constant}
L^N_t:=\sup_{\hat{\gamma}\in\hat{\Gamma}}E_{\pi_0^-}^{\hat{\gamma}}\left[\|P^{\pi_t^-}(X_{t+N}\in\cdot|Y_{[t,t+N]},U_{[t,t+N-1]})-P^{\pi^*}(X_{t+N}\in\cdot|Y_{[t,t+N]},U_{[t,t+N-1]})\|_{TV}\right]
\end{align} 
which is the expected bound on the total variation distance between the posterior distributions of $X_{t+N}$ conditioned on the same observation and control action variables $Y_{[t,t+N]},U_{[t,t+N-1]}$ when the prior distributions of $X_{t}$ are given by $\pi_t^-$ and $\pi^*$. The expectation is with respect to the random realizations of $\pi_t^-$ and $Y_{[t,t+N]},U_{[t,t+N-1]}$ under the true dynamics of the system when the prior distribution of $x_0$ is given by $\pi_0^-$.  This constant represents the bound on the distance of two processes with different starting points when they are updated with the same observation and action processes under same policy. This is related to the filter stability problem, which will be discussed in Section \ref{secfilterStable}.

For the remaining of the paper, we will drop the $N$ dependence and denote the term by $L_t$.

\begin{theorem}\label{cont_bound}
For $\hat{z}_0=(\pi_0^-,I_0^N)$, if a policy $\hat{\gamma}$ acts on the first $N$ step of the process which produces $I_0^N$, we then have
\begin{align*}
E_{\pi_0^-}^{\hat{\gamma}}\left[\left|\tilde{J}^N_\beta(\hat{z}_0)-J^*_\beta(\hat{z}_0)\right||I_0^N\right]\leq  \frac{\|c\|_\infty }{(1-\beta)}\sum_{t=0}^\infty\beta^tL_t
\end{align*}
where $L_t$ is defined as in ($\ref{loss_constant}$).
\end{theorem}
\begin{proof}{Proof.}
The proof can be found in Appendix \ref{proof_cont_bound}.
\end{proof}

\subsubsection{Performance loss due to approximate policy being applied to the true system in terms of the filter stability error} We now study the difference $J_\beta(\hat{z},\tilde{\phi}^N) -J^*_\beta(\hat{z})$ where $\tilde{\phi}^N$ is the optimal policy for the approximate model extended to the full space $\hat{\mathcal{Z}}$.

\begin{theorem}\label{robust_bound}
For $\hat{z}_0=(\pi_0^-,I_0^N)$, with a policy $\hat{\gamma}$ acting on the first $N$ steps
\begin{align*}
E_{\pi_0^-}^{\hat{\gamma}}\left[\left|J_\beta(\hat{z}_0,\tilde{\phi}^N) -J^*_\beta(\hat{z}_0)\right||I_0^N\right]\leq  \frac{2\|c\|_\infty }{(1-\beta)}\sum_{t=0}^\infty\beta^tL_t.
\end{align*}
\end{theorem}
\begin{proof}{Proof.}
The proof can be found in Appendix \ref{proof_robust_bound}.
\end{proof}

\begin{remark}\label{comparison}
In \cite{kara2020near}, we constructed finite state approximate belief MDP using the state space $\hat{Z}_{\pi^*}^N$ defined in (\ref{finite_set_belief}). However, different from the approach we use in this paper, to determine the approximate states, we used a nearest neighbor map, to choose the closest element from the set $\hat{Z}_{\pi^*}^N$ to the $\P(\sX)$-valued belief state $z_t:=\sPr\{X_{t} \in \,\cdot\, | Y_0,\ldots,Y_t, U_0, \ldots, U_{t-1}\}$ under the {\it bounded Lipchitz} (BL) metric. We recall that the bounded Lipchitz metric, $\rho_{BL}$, for some $\mu,\nu\in\P(\mathds{X})$ is given in (\ref{BLmetric}).
%
%
%by
%\begin{equation*}
%\rho_{BL}(\mu,\nu):=\sup_{\|f\|_{BL}\leq1} | \int f d\mu - \int f d\nu | 
%\end{equation*}
%where \[ \|f\|_{BL}:=\|f\|_\infty+\sup_{x\neq y}\frac{|f(x)-f(y)|}{d(x,y)} \]
%and $\|f\|_\infty=\sup_{x\in\mathds{X}}|f(x)|$. 
To find the closest element from $\hat{Z}_{\pi^*}^N$ one needs to know the belief state realization $z_t$ and to calculate/update the belief state, the system dynamics need to be known. However, as we will see later, the $Q$ learning algorithm presented here, using only the finite window information variables $I_t^N$,  converges to the optimality equation of an approximate belief MDP that maps the belief state to an element from $\hat{Z}_{\pi^*}^N$ with matching finite window information rather than the closest element under the bounded Lipschitz metric. Hence, the alternative belief MDP construction and the approximation setup we have presented in this section serves better to analyze the approximate Q-learning algorithm which strictly uses the finite-window memory variables. In other words, one does not need to calculate the belief state but only needs to keep track of the information variables $I_t^N$ for the approximation method introduced in this section. In particular, the state $(\pi_{t-N}^-,I_t^N)$ is always mapped/quantized to $(\pi^*,I_t^N)$ which can be done without the knowledge or computation of $\pi_{t-N}^-$ as long as we have $I_t^N$ available. Furthermore, In \cite{kara2020near}, since we directly work with the topology and the metrics on the space of probability measures, the distinction between different realizations of history variables might be lost, as we only care about the resulting posterior distribution on the hidden state variable, e.g., different realizations of history variables may produce same posterior distribution. However, for the finite-memory Q-learning iterations, it is key to be able to differentiate between the different realizations of finite-memory feedback variables. Hence, in this paper, we put a different topology on $\hat{Z}_{\pi^*}^N$, by separating the finite-memory variables, rather than directly working with probability measures. This approach helps us to distinguish between different finite-memory realizations.

On the other hand, one advantage of the approximation scheme used in \cite{kara2020near} is that because of the nearest neighbourhood map, one naturally arrives at a smaller approximation error. Furthermore, because of the continuity properties of the nearest neighbor map under the BL metric, one is able to work with the weak convergence topology, as such, we get an upper bound in terms of the BL metric $\rho_{BL}$, such that the bounding term is \[\rho_{BL}\left(P^{\pi}(X_t\in\cdot|y_{[t,t-N]},u_{[t-1,t-N]}),P^{\pi^*}(X_t\in\cdot|y_{[t,t-N]},u_{[t-1,t-N]})\right)\] which is always dominated by the total variation metric that we use in this section. \hfill $\diamond$
\end{remark}

The different formulations and the approximation approach are summarized in Figure \ref{FigFiniteWMDP}.

\begin{figure}[h]
\centering
\epsfig{figure=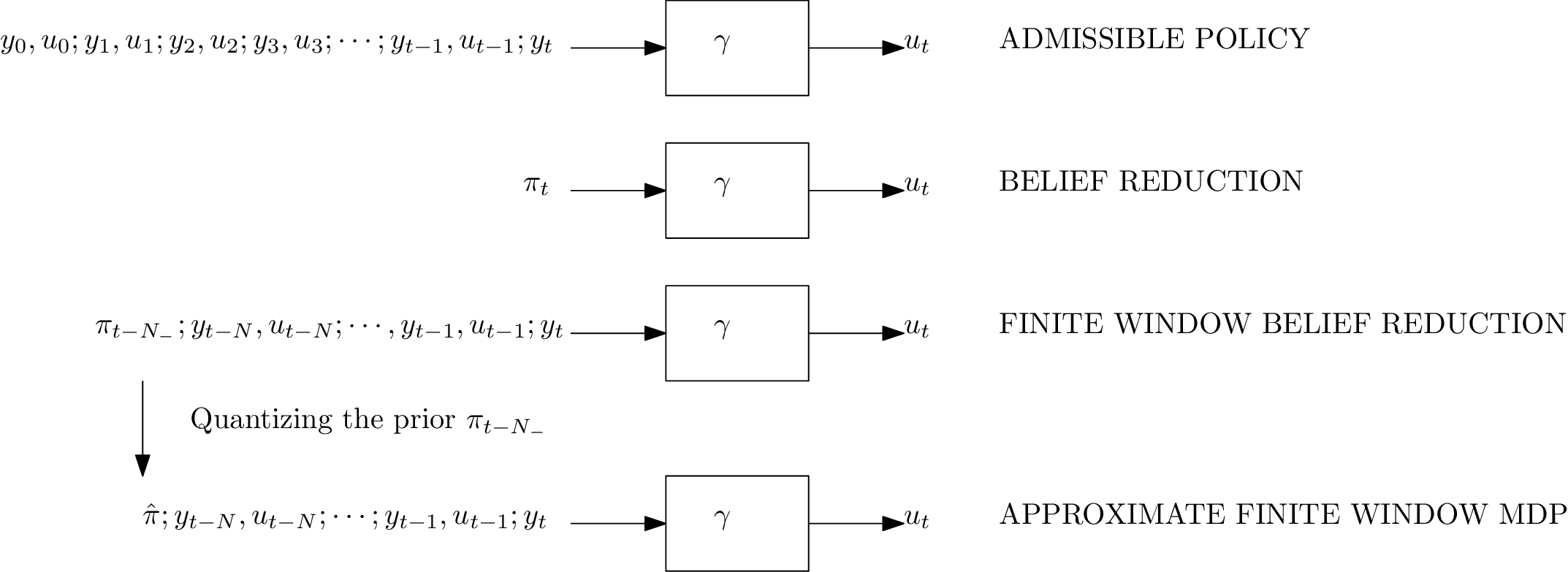,height=6cm,width=15cm}
\caption{Construction of the Finite-Window Approximate MDP from the Finite-Window Belief-MDP. The quantization of the finite window MDP model leads to the collapse of the first coordinate to a fixed measure.}\label{FigFiniteWMDP}
\end{figure}

\section{Q Iterations Using a Finite History of Information Variables and Convergence}\label{q_section}

%\sy{Perhaps instead of unique ergodicity, we can just assume positive Harris recurrence. Let's discuss this. For unique ergodicity, we would require that the initial state, during the exploration phase, should belong to the support of the invariant measure. For PHR, the initial state doesn't matter. }

Assume that we start keeping track of the last $N+1$ observations and the last $N$ control action variables after at least $N+1$ time steps. That is, at time $t$, we keep track of the information variables
\begin{align*}
I^N_t=\begin{cases}&\{y_t, y_{t-1},\dots,y_{t-N},u_{t-1},\dots,u_{t-N}\} \quad \text{ if } N>0\\
&y_t  \quad \text{ if } N=0.\end{cases}
\end{align*}
We will construct the Q-value iteration using these information variables. In what follows, we will drop the $N$ dependence on $I_t^N$ and sometimes we will use $N=1$ for simplicity of the notation. For these new approximate states, we follow the usual Q learning algorithm such that for any $I\in\mathds{Y}^{N+1}\times\mathds{U}^N$ and $u\in\mathds{U}$
\begin{align}\label{q_alg}
Q_{t+1}(I,u)=(1-\alpha_t(I,u))Q_t(I,u)+\alpha_t(I,u)\left(C_t(I,u)+\beta \min_v Q_t(I^t_1,v)\right),
\end{align}
where $I^t_1 = \{Y_{t+1}, y_{t},\dots,y_{t-N+1},u_{t},\dots,u_{t-N+1}\}$, we put the $t$ dependence to emphasize that the distribution of $Y_{t+1}$ and hence $I_1^t$ are different for every $t$. %the time we hit $\{y_t, y_{t-1},\dots,y_{t-N},u_{t-1},\dots,u_{t-N}\}$ for the $k$-th time.

%We note that the update times are different for each $(I,u)$ pair, that is, $Q_k(I,u)$ is updated only when the process hits $(I,u)$.

To choose the control actions, we use polices that choose the control actions randomly and independent of everything else such that at time $t$
\begin{align*}
u_t= u_i, \text{ w.p } \sigma_i
\end{align*}
for any $u_i\in\mathds{U}$ with $\sigma_i>0$ for all $i$. 

We note that for the convergence of the learning algorithm, it is sufficient for the hidden state process to converge to its invariant distribution under the exploration policy. Hence, any policy that leads the hidden state process to its invariant measure and visits every action with positive probability can be used for the exploration. For example, the control action can also be chosen to be a function of the most recent measurement and randomized (as long as all actions have positive probability of being selected for every measurement realization); this would again lead to a uniquely ergodic hidden state process under our assumptions.

The algorithm differs from the usual Q-value iteration:
\begin{itemize}
\item[(i)] The distribution of $I_1^t$, which is the consecutive N-window information variable when we hit the $(I,u)$, is generally different for every $t$ and the pair $(I,u)$ is not a controlled Markov process.

In other words, the controlled transitions are time dependent, that is, if we assume $N=1$ then for some $I=(y_t,y_{t-1},u_{t-1})$ and $u=u_{t}$:
\begin{align*}
&Pr(I_1^t=(y'_{t+1},y'_t,u'_t)|z=(y_t,y_{t-1},u_{t-1}),u_t)=\mathds{1}_{\{y_t=y_t',u_t=u_t'\}}Pr(y_{t+1}|y_t,y_{t-1},u_t,u_{t-1})
\end{align*} 
is not stationary and might change at every time step $t$, since $Pr(y_{t+1}|y_t,y_{t-1},u_t,u_{t-1})$ depends on the marginal distribution of $x_{t-1}$ ($x_{t-N}$ in the general case).

%where $\pi_{t-N}$ denotes the marginal distribution of the true state $x_{t-N}$ at the time step $t-N$, $R$ is the measurement channel, and $\mathcal{T}$ is the transition kernel for the partially observed model.

\item[(ii)] Here, we only observe the cost realizations of the underlying state process $\{x_t\}_t$ and the control actions. For example, if we assume that $N=1$ then the cost we observe is $c(x_t,u_t)$. However, $c(x_t,u_t)$ depends on $(I,u)$ pair randomly and in a time dependent way so that for some $I=(y_t,y_{t-1},u_{t-1})$ and $u=u_{t}$:
\begin{align*}
C_t(I,u)=c(x_t,u_t) \in B ,\quad \text{w.p. } Pr(X_t \in \{x: c(x,u_t) \in B\} |y_t,y_{t-1},u_{t-1})
\end{align*}
where $Pr(dx_t|y_t,y_{t-1},u_{t-1})$ can be seen as some {\it pseudo-belief} on the underlying state variable given $I=(y_t,y_{t-1},u_{t-1})$, the most recent $N=1$ information variables. In other words, $Pr(dx_t|y_t,y_{t-1},u_{t-1})$ is the Bayesian update of $\pi_{t-1}$, the marginal distribution of the true state $x_{t-1}$ at the time step $t-1$, using $I=(y_t,y_{t-1},u_{t-1})$ and thus, it is time dependent. \hfill $\diamond$
\end{itemize}

%where $c_t(z_t,u_t)$ is the cost realization for the underlying true state $x_t$ that is $c_t(z_t,u_t)=c(x_t,u_t)$, the reason we put a time dependence arises from the fact that $z_t$ depends on $x_t$ though a time dependent relationship (\adk{be more rigorous here}). To choose the control actions, we use stationary policies, the reason for which will become clear in the following analysis.

%\sy{We need unique ergodicity...}
We will observe that, if one assumes that the hidden state process, $\{x_t\}_t$ is positive Harris recurrent, or at least, admits a unique invariant probability measure $\pi^*$ under a stationary exploration policy $\gamma$, then the average of approximate state transitions gets closer to
\begin{align}\label{inv_kernel}
&P^{*}(I_{t+1}|I_t,u_t):=\hat{\eta}^N((\pi^*,I_{t+1})|(\pi^*,I_t),u_t)
\end{align}
with $\hat{\eta}^N$ is defined as in  (\ref{eta_N}) and (\ref{eta_N_1}). In particular, if we assume $N=1$, then we write
\begin{align}\label{inv_kernel_1}
&P^{*}(I_{t+1}=(y_{t+1}',y_t',u_t')|I_{t}=(y_t,y_{t-1},u_{t-1}),u_t)=\mathds{1}_{\{y_t'=y_t,u_t'=u_t\}}P^{\pi^*}(y_{t+1}|y_t,y_{t-1},u_t,u_{t-1})
\end{align}
where $P^{\pi^*}(y_{t+1}|y_t,y_{t-1},u_t,u_{t-1})$ denotes the distribution of $y_{t+1}$ when the marginal distribution on $x_{t-1}$ is given by the invariant measure $\pi^*$.

We also have that the sample path averages of the random cost realizations get close to,
\begin{align*}
&C^*(I,u)=\hat{c}(\pi^*,I,u)=\int_\mathds{X}c(x,u)P^{\pi^*}(dx|I)
\end{align*}
where,  $P^*(x|I)$ is the Bayesian update of $\pi^*$, using $I$ and $\hat{c}(\pi^*,I,u)$ is defined as in (\ref{hat_cost}). If we assume $N=1$, we can write for some $I=(y_1,y_0,u_0)$ and $u=u_1$
\begin{align}\label{C_star}
&C^*(y_1,y_0,u_0,u_1)=\hat{c}(\pi^*,(y_1,y_0,u_0),u_1)=\int_\mathds{X}c(x_1,u_1)P^{\pi^*}(dx_1|y_1,y_0,u_0).
\end{align}

Now consider the following fixed point equation
\begin{align}\label{fixed}
Q^*(I,u)=C^*(I,u)+\beta\sum_{I'} P^*(I'|I,u)\min_vQ^*(I',v)
\end{align}
where $P^*$ is defined in (\ref{inv_kernel}) and $C^*$ is defined in (\ref{C_star}). 

The existence of a such fixed point follows from usual contraction arguments. The same fixed equation can also be written as, for $N=1$, and for $I=(y_1,y_0,u_0)$ and $u=u_1$
\begin{align}\label{fixed_y}
Q^*\left((y_1,y_0,u_0),u_1\right)=C^*\left((y_1,y_0,u_0),u_1\right)+\beta\sum_{y_2\in\mathds{Y}}P^{\pi^*}(y_2|y_1,y_0,u_1,u_0)\min_{v\in\mathds{U}}Q^*\left((y_2,y_1,u_1),v\right).
\end{align}

For the rest of the paper, we will use the following notation
\begin{align}\label{valueUpDefn}
V^*(I)&:=\min_{v\in\mathds{U}}Q^*(I,v)\\
V_t(I)&:=\min_{v\in\mathds{U}}Q_t(I,v).
\end{align}

We note that the stationary distribution $\pi^*$ does not have to be calculated by the decision maker. The Q value iterations given in (\ref{q_alg}) only use the finite-memory variables $I$, and $\pi^*$ is not used in the iterations. We will show that the algorithm naturally converges to (\ref{fixed}), if the state process is positive Harris recurrent, or at least, admits a unique invariant probability measure $\pi^*$ under a stationary exploration policy $\gamma$, where $\pi^*$ will be the stationary distribution of the hidden state process $x_t$ under the exploration policy. The performance loss will depend on the stationary distribution $\pi^*$ that is learned via the exploration policy, however, we will establish further upper bounds that are uniform over such $\pi^*$ which decrease exponentially with the window size $N$ (see Theorem \ref{curtis_thm}, equation (\ref{curtis_bound}), Corollary \ref{cor1} and Corollary \ref{cor2}).

\begin{assumption}\label{partial_q}
\hfill
\begin{itemize}
\item [1.] $\alpha_t(I,u)=0$ unless $(I_t,u_t)=(I,u)$. Furthermore,
\[\alpha_t(I,u) = {1 \over 1+ \sum_{k=0}^{t} 1_{\{I_k=I, u_k=u\}} }\]
We note that, this means that $\alpha_{k}(I,u)=\frac{1}{k}$ if $I_{k}=I,u_{k}=u$, if $k$ is the instant of the $k$th visit to $(I,u)$, as this will be crucial in the averaging of the Markov chain dynamics (see Remark \ref{onLearningRates}).
\item [2.] Under every stationary \{memoryless or finite memory exploration\} policy, say $\gamma$, the true state process, $\{X_t\}_t$, is positive Harris recurrent and in particular admits a unique invariant measure $\pi_\gamma^*$.
\item [3.] During the exploration phase, every $(I,u)$ pair is visited infinitely often.
\end{itemize}
\end{assumption}

%\begin{assumption}\label{partial_q}
%\begin{itemize}
%\item[1.] $\alpha_t(z,u)=\frac{1}{t}$.

%\item [2.] Under every stationary policy, the true state process, $\{x_t\}_t$, admits an invariant measure $\pi_\gamma^*$.

%\item[3.] The transition kernel $\mathcal{T}(dx_1|x_0,u_0)$ admits a density function $f(x_1,x_0,u_0)$ such that $\mathcal{T}(dx_1|x_0,u_0)=f(x_1,x_0,u_0)\phi(dx_0)$ for some reference measure $\phi$ and $f(x_1,x_0,u_0)$ is Lipschitz continuous in $x_0$ with some Lipschitz constant $k<\infty$.
%\end{itemize}
%\end{assumption}

\begin{theorem}\label{main_thm}
Under Assumption \ref{partial_q},
\begin{itemize}
\item[i.] The algorithm given in (\ref{q_alg}) converges almost surely to $Q^*$ which satisfies (\ref{fixed}).
\item[ii.]  For any policy $\gamma^N$ that satisfies $Q^*(I,\gamma^N(I))=\min_uQ^*(I,u)$, if we assume that the controller starts using $\gamma^N$ at time $t=N$ (after observing at least $N$ information variables), then denoting the prior distribution of $X_N$ by $\pi_N^-$, conditioned on the first $N$ step information variables we have
\begin{align*}
E\left[J_\beta(\pi_N^-,\mathcal{T},\gamma^N)-J_\beta^*(\pi_N^-,\mathcal{T})|I_0^N\right]\leq \frac{2\|c\|_\infty }{(1-\beta)}\sum_{t=0}^\infty\beta^t L_t
\end{align*}
where $L_t$ is defined in (\ref{loss_constant}) such that
\begin{align*}
L_t:=\sup_{\hat{\gamma}\in\hat{\Gamma}}E_{\pi_0^-}^{\hat{\gamma}}\left[\|P^{\pi_t^-}(X_{t+N}\in\cdot|Y_{[t,t+N]},U_{[t,t+N-1]})-P^{\pi^*}(X_{t+N}\in\cdot|Y_{[t,t+N]},U_{[t,t+N-1]})\|_{TV}\right]
\end{align*} 
 and $\pi^*$ is the invariant measure on $x_t$ under the exploration policy $\gamma$.
\end{itemize}
\end{theorem}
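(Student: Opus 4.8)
The plan is to treat part (i) as the substantive claim and to obtain part (ii) quickly from Theorem \ref{robust_bound}. For part (ii), first I would observe that restricting the predictor coordinate to $\pi^*$ makes the fixed point equation (\ref{fixed}) coincide term by term with the dynamic programming equation (\ref{N_fixed}) of the approximate finite-window belief-MDP: indeed $\hat c((\pi^*,I),u)=C^*(I,u)$ by (\ref{hat_cost}) and (\ref{C_star}), while $\hat\eta^N((\pi^*,I')\,|\,(\pi^*,I),u)=P^*(I'\,|\,I,u)$ by (\ref{inv_kernel}) and (\ref{eta_N}). Since the corresponding Bellman operator is a $\beta$-contraction in the supremum norm on a finite space, $Q^*$ is the unique optimal state-action value function of that MDP, and any $\gamma^N$ that is greedy with respect to $Q^*$ induces, through the extension $\tilde\phi^N$, an optimal stationary policy for the approximate MDP. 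Identifying $J_\beta(\mu,\mathcal{T},\gamma^N)$ with $J_\beta(\hat z,\tilde\phi^N)$ and $J_\beta^*(\mu,\mathcal{T})$ with $J_\beta^*(\hat z)$ — legitimate because $(\hat{\mathcal{Z}},\hat c,\hat\eta)$ is an exact reduction of the POMDP once a length-$(N+1)$ window is available, and because the belief-averaged cost $\hat c$ reproduces the realized POMDP cost in expectation — the bound then follows directly from $\sup_{\hat z}|J_\beta(\hat z,\tilde\phi^N)-J_\beta^*(\hat z)|\le 2\|c\|_\infty L/(1-\beta)^2$ of Theorem \ref{robust_bound}.

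For part (i), the obstacle is that $(I_t^N,u_t)$ is not a time-homogeneous controlled Markov chain, so the target in (\ref{q_alg}) is not an unbiased sample of a single Bellman operator applied to $Q_k$. The strategy is to enlarge the state: under a stationary exploration policy the process $\xi_t:=(X_{t-N},\dots,X_t,Y_{t-N},\dots,Y_t,U_{t-N},\dots,U_t)$ is a time-homogeneous Markov chain, and by Assumption \ref{partial_q} it inherits positive Harris recurrence from $\{X_t\}$, hence admits a unique invariant probability measure $\nu$ whose $X_{t-N}$-marginal is $\pi^*$ and under which the action coordinates are the exploration randomization, independent of the rest. Conditioning the target $C_k(I,u)+\beta V_k(I_1^k)$ at the $k$-th visit to $(I,u)$ on the algorithm's past, its conditional mean is $\int c(x,u)\,P^{\pi_{\tau_k-N}^-}(dx\,|\,I)+\beta\sum_{I'}\Pr(I_1^k=I'\,|\,\mathcal{F}_{\tau_k})\,V_k(I')$ plus a bounded martingale-difference term, with the true predictor $\pi_{\tau_k-N}^-$ appearing in place of $\pi^*$. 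The key lemma I would establish is a ratio ergodic statement along visit times: the Cesàro averages over the visits to $(I,u)$ of these predictor-dependent costs and transition probabilities converge almost surely to $C^*(I,u)$ and $P^*(\cdot\,|\,I,u)$. This is where positive Harris recurrence is used fully — one first passes from the predictor-dependent conditional quantities to the realized ones ($c(X_t,u_t)$ and $\mathds{1}_{\{I_{t+1}^N=I'\}}$) by a martingale argument, then applies the ratio ergodic theorem for $\xi$, and finally computes $E_\nu[c(X_t,u)\,|\,I_t^N=I,u_t=u]$; because $X_{t-N}\sim\pi^*$ under $\nu$ while the window observations and actions are generated from $X_{t-N}$ by the filtering recursion, this conditional mean equals $\int c(x,u)P^{\pi^*}(dx\,|\,I)=C^*(I,u)$, and the same computation handles the transition averages.

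With this lemma, (\ref{q_alg}) assumes the form of an asynchronous stochastic approximation recursion $Q_{k+1}=(1-\alpha_k)Q_k+\alpha_k\big(\mathcal{F}Q_k+\varepsilon_k+w_k\big)$, where $\mathcal{F}$ is the $\beta$-contraction defining (\ref{fixed}), $w_k$ is a square-integrable martingale difference, and $\varepsilon_k$ is a bias term whose partial sums $\sum_{k=1}^n\varepsilon_k$ are $o(n)$ by the ergodic lemma (using the a priori bound $\|Q_k\|_\infty\le\|c\|_\infty/(1-\beta)$, which the recursion propagates). Because Assumption \ref{partial_q} forces $\alpha_k=1/k$ at the $k$-th visit — so the recursion literally forms running Cesàro averages of the targets — and guarantees each $(I,u)$ is updated infinitely often, a standard stochastic approximation convergence argument (the contraction/ODE method, or a Tsitsiklis-type asynchronous argument adapted to tolerate an asymptotically Cesàro-vanishing bias) yields $Q_k\to Q^*$ almost surely. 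I expect the main difficulty to be precisely the ergodic lemma — the almost sure convergence of the visit-time Cesàro averages of the genuinely time-varying, random predictor together with the identification of the limit as the $\pi^*$-based pseudo-belief object — and the bookkeeping needed to feed an asynchronous, asymptotically biased recursion into the available stochastic approximation theorems.
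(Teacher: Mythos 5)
Your overall route is the same as the paper's: part (ii) is obtained exactly as in the paper (the fixed point equation (\ref{fixed}) coincides with the dynamic programming equation (\ref{N_fixed}) of the approximate finite-window model, so Theorem \ref{robust_bound} applies to any greedy policy), and your ergodic lemma — enlarging the state to include the hidden state, using positive Harris recurrence/strong Markov sampling at the visit times, and identifying the visit-time Cesàro limits of the realized costs and next-window frequencies as $C^*(I,u)$ and $P^*(\cdot|I,u)$ through the stationary marginal $\pi^*$ — is precisely the content of the paper's Appendix \ref{v_kto0}.

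There is, however, a genuine gap in the step where you feed the recursion into stochastic approximation. You write the target as $\mathcal{F}Q_k+\varepsilon_k+w_k$ with bias $\varepsilon_k=[\hat C_k(I,u)-C^*(I,u)]+\beta\sum_{I'}\bigl(\hat P_k(I'|I,u)-P^*(I'|I,u)\bigr)V_k(I')$, where $\hat C_k,\hat P_k$ are the predictor-dependent conditional quantities at the $k$-th visit, and you claim $\sum_{k\le n}\varepsilon_k=o(n)$ "by the ergodic lemma, using the a priori bound on $\|Q_k\|_\infty$". The cost part is fine, but the transition part pairs a sequence that converges only in the Cesàro sense (not pointwise, since no filter stability is assumed here) with the time-varying, adapted sequence $V_k$; Cesàro convergence of $a_k:=\hat P_k-P^*$ to zero plus boundedness of $b_k:=V_k$ does not give $\frac1n\sum_k a_kb_k\to 0$ (take $a_k=b_k=(-1)^k$), and $V_k$ can in principle correlate with the fluctuations of $\hat P_k-P^*$. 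So the recursion you hand to the asynchronous stochastic-approximation machinery does not have the property you assert, and an extra idea is needed. The paper's fix is to define the ergodically averaged component relative to the \emph{fixed} limit function $V^*$ rather than $V_k$ (the term $r_k^*$ and the process $v_k$, shown to vanish in Appendix \ref{v_kto0}), and to absorb the residual $\beta\bigl(V_k(I_1^k)-V^*(I_1^k)\bigr)$, which is bounded by $\beta\|Q_k-Q^*\|_\infty$, into a sup-norm contraction argument in the style of Jaakkola et al.\ (the $\delta_k+u_k$ iteration, shown to enter and remain in a ball of radius $C\epsilon$). Alternatively you would need a nontrivial summation-by-parts/slow-variation argument exploiting that the increments of $V_k$ between visits shrink like the learning rates — but that is an additional argument, not a consequence of the ergodic lemma and the a priori bound alone. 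With this repair your proof coincides with the paper's.
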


\begin{proof}
For the proof of $i$, that is for the convergence of Q-learning, we separate the iterations into sub-iterations which are linear (as in \cite{jaakkola1994convergence}, where this superposition principle of linear systems theory is utilized in showing the convergence of standard Q-learning algorithm). For the first part of the separated iterations, we use the fact that the dynamic programming equation is a contraction to prove its convergence which is similar to the traditional Q learning algorithms. For the remaining part of the iteration, we analyze the asymptotic behaviour of $I_1^t$, in which we distinguish our analysis from the traditional Q learning algorithms: For the usual Q iterations, one needs to study $X_1$ that is the consecutive state following some $(x,u)$ pair, and we have that $X_1\sim \mathcal{T}(\cdot|x,u)$. Thus, it is distributed independently and identically given $(x,u)$ which allows one to use Robbins–Monro type algorithms, to show the convergence. However, distributions of $I_1^t$'s are time dependent and not controlled Markovian. To study the asymptotic behavior of $I_1^t$, we construct a different pair process which is Markov and we use ergodicity properties of Markov chains.

We first prove that the process $Q_t$, determined by the algorithm in (\ref{q_alg}), converges almost surely to $Q^*$. We  define 
\begin{align*}
\Delta_t(I,u)&:=Q_t(I,u)-Q^*(I,u)\\
F_t(I,u)&:=C_t(I,u)+\beta V_t(I^t_1)-Q^*(I,u)\\
\hat{F}_t(I,u)&:=C^*(I,u)+\beta\sum_{I_1} V_t(I_1)P^*(I_1|I,u) -Q^*(I,u),
\end{align*}
where ($V_t$ is defined in \ref{valueUpDefn}).

Then, we can write the following iteration
\begin{align*}
\Delta_{t+1}(I,u)=(1-\alpha_t(I,u))\Delta_t(I,u)+\alpha_t(I,u) F_t(I,u).
\end{align*}
Now, we write $\Delta_t=\delta_t+w_t$ such that 
\begin{align*}
\delta_{t+1}(I,u)&=(1-\alpha_t(I,u))\delta_t(I,u)+\alpha_t(I,u) \hat{F}_t(I,u)\\
w_{t+1}(I,u)&=(1-\alpha_t(I,u))w_t(I,u)+\alpha_t(I,u) r_t(I,u)
\end{align*}
where $r_t:=F_t-\hat{F}_t=\beta V_t(I_1^t)-\beta \sum_{I_1}V_t(I_1)P^*(I_1|I,u) + C_t(I,u)- C^*(I,u)$. Next, we define
\begin{align*}
r_t^*(I,u)=\beta V^*(I^t_1)-\beta \sum_{I_1}V^*(I_1)P^*(I_1|I,u) + C_t(I,u)-C^*(I,u)
\end{align*}
We further separate $w_t=u_t+v_t$ such that
\begin{align*}
u_{t+1}(I,u)&=(1-\alpha_t(I,u))u_t(I,u)+\alpha_t(I,u) e_t(I,u)\\
v_{t+1}(I,u)&=(1-\alpha_t(I,u))v_t(I,u)+\alpha_t(I,u) r^*_t(I,u)
\end{align*}
where $e_t=r_t-r^*_t$. 

%By defining $r_t^*(z,u)=\beta V^*(Z_1)-\beta \sum_{z_1}V^*(z_1)P^*(z_1|z,u) + C_t(z,u)- C^*(z,u)$. We write $v_t=h_t+s_t$ such that
%\begin{align*}
%h_{t+1}(z_t,u_t)&=(1-\alpha_t(z_t,u_t))h_t(z_t,u_t)+\alpha_t(z_t,u_t) e^*_t(z_t,u_t)\\
%s_{t+1}(z_t,u_t)&=(1-\alpha_t(z_t,u_t))s_t(z_t,u_t)+\alpha_t(z_t,u_t) r^*_t(z_t,u_t)
%\end{align*}
%where $e_t^*=\hat{r}_t-r^*_t$.

%We can observe that
%\begin{align*}
%\hat{F}_t(z,u)&=\beta\sum_{z_1} V_t(z_1)P^*(z_1|z,u)- \beta\sum_{z_1} V^*(z_1)P^*(z_1|z,u)\\
%&\leq \beta \max_{u,z}|Q_t(z,u)-Q^*(z,u)|=\beta \|\Delta_t\|_\infty\\
%e_t^*(z,u)&=\beta V^*(Z_1)- \beta V_t(Z_1)\\
%&\leq \beta \max_{u,z}|Q_t(z,u)-Q^*(z,u)|=\beta \|\Delta_t\|_\infty
%\end{align*}

In the appendix (see Section \ref{v_kto0}), we show that $v_t(I,u)\to 0$ almost surely for all $(I,u)$. 

Now, we go back to the iterations:
\begin{align*}
\delta_{t+1}(I,u)&=(1-\alpha_t(I,u))\delta_t(I,u)+\alpha_t(I,u) \hat{F}_t(I,u)\\
u_{t+1}(I,u)&=(1-\alpha_t(I,u))u_t(I,u)+\alpha_t(I,u) e_t(I,u)\\
v_{t+1}(I,u)&=(1-\alpha_t(I,u))v_t(I,u)+\alpha_t(I,u) r^*_t(I,u).
\end{align*}
Note that, we want to show $\Delta_t=\delta_t+u_t+v_t \to 0$ almost surely and we have that $v_t(I,u)\to 0$ almost surely for all $(I,u)$. The following analysis holds for any path that belongs to the probability one event in which $v_t(I,u)\to 0$. For any such path and for any given $\epsilon>0$, we can find an $N<\infty$ such that $\|v_t\|_\infty<\epsilon$ for all $t>N$ as $(I,u)$ takes values from a finite set.

We now focus on the term $\delta_t + u_t$ for $t>N$:
\begin{align}\label{sum_proc}
(\delta_{t+1}+u_{t+1})(I,u)&=(1-\alpha_t(I,u))(\delta_t+u_t)(I,u)+\alpha_t(I,u) (\hat{F}_t+e_t)(I,u).
\end{align}
Observe that  for $t>N$,
\begin{align*}
(\hat{F}_t+e_t)(I,u)=(F_t-r_t^*)(I,u)=\beta V_t(I^t_1)-\beta V^*(I^t_1)&\leq \beta\max_{I,u}|Q_t(I,u)-Q^*(I,u)|=\beta\|\Delta_t\|_\infty\\
&\leq \beta \|\delta_t+u_t\|_\infty+\beta \epsilon
\end{align*}
where the last step follows from the fact that $v_t\to 0$ almost surely. By choosing $C<\infty$ such that $\hat{\beta}:=\beta(C+1)/C<1$, for $\|\delta_t+u_t\|_\infty>C\epsilon$, we can write that
\begin{align*}
 \beta \|\delta_t+u_t+\epsilon\|_\infty\leq \hat{\beta}\|\delta_t+u_t\|_\infty.
\end{align*}
Now we rewrite (\ref{sum_proc})
\begin{align}
(\delta_{t+1}+u_{t+1})(I,u)&=(1-\alpha_t(I,u))(\delta_t+u_t)(I,u)+\alpha_t(I,u) (\hat{F}_t+e_t)(I,u) \nonumber \\
&\leq (1-\alpha_t(I,u))(\delta_t+u_t)(I,u)+\alpha_t(I,u)  \hat{\beta}\|\delta_t+u_t\|_\infty \label{QBound12} \\
&<\|\delta_t+u_t\|_\infty \nonumber
\end{align}

Hence $\max_{I,u}((\delta_{t+1}+u_{t+1})(I,u))$ monotonically decreases for $\|\delta_t+u_t\|_\infty>C\epsilon$ and hence there are two possibilities: it either gets below $C\epsilon$ or it never gets below $C \epsilon$ in which case by the monotone non-decreasing property it will converge to some number, say $M_1$ with $M_1 \geq C\epsilon$. 

First, we show that once the process hits below $C\epsilon$, it always stays there. Suppose $\|\delta_t+u_t\|_\infty<C\epsilon$,
%\sy{put the following under the norm, or absolute value:}
\begin{align}
(\delta_{t+1}+u_{t+1})(I,u)&\leq(1-\alpha_t(I,u))(\delta_t+u_t)(I,u)+\alpha_t(I,u) \beta \left(\|\delta_t+u_t\|_\infty+\epsilon\right) \nonumber \\
&\leq (1-\alpha_t(I,u)) C\epsilon + \alpha_t(I,u) \beta (C\epsilon + \epsilon) \nonumber \\
&= (1-\alpha_t(I,u)) C\epsilon  + \alpha_t(I,u) \beta (C+1) \epsilon  \nonumber\\
&\leq  (1-\alpha_t(I,u)) C\epsilon  + \alpha_t(I,u) C\epsilon, \quad (\beta(C+1)\leq C)  \nonumber \\
&=C\epsilon.  \nonumber
\end{align}

We now show that the latter, that is with the limit being $M_1 \geq C\epsilon$, is not possible. By (\ref{QBound12}), we have that for all $(I,u)$

%\begin{align}
%(\delta_{k+1}+u_{k+1})(I,u)&=(1-\alpha_k(I,u))(\delta_k+u_k)(I,u)+\alpha_k(I,u) (\hat{F}_k+e_k)(I,u) \nonumber \\
%&\leq (1-\alpha_k(I,u))(\delta_k+u_k)(I,u)+\alpha_k(I,u)  \hat{\beta}\|\delta_k+u_k\|_\infty \label{QBound12} \\
%&<\|\delta_k+u_k\|_\infty \nonumber
%\end{align}

\[\sum_t \alpha_t(I,u) \bigg( (\delta_t+u_t)(I,u)) - \hat{\beta}\|\delta_t+u_t\|_\infty \bigg) \leq (\delta_{0}+u_{0})(I,u) -\liminf_{N \to \infty} (\delta_t+u_t)(I,u)\]

If, pointwise, $(\delta_t+u_t)(I,u)$ admits a limit, the maximum over $(I,u)$ is attained by an individual $(I,u)$ beyond a finite index and thus we arrive at
\[\sum_t \alpha_t(I,u) \bigg( \|\delta_t+u_t\|_\infty - \hat{\beta}\|\delta_t+u_t\|_\infty \bigg) \leq (\delta_{0}+u_{0})(I,u) -\liminf_{t \to \infty} (\delta_t+u_t)(I,u)\]
which is a contradiction as $\alpha_t$ is not summable and the difference $\bigg( \|\delta_t+u_t\|_\infty - \hat{\beta}\|\delta_t+u_t\|_\infty \bigg)$, beyond a finite number, is bounded from below as $\hat{\beta} < 1$.

Thus, it suffices to show that $(\delta_t+u_t)(I,u)$ admits a limit. That this limit exists follows from \cite[Lemma 3]{jaakkola1994convergence} as (\ref{QBound12}) is an instance of bounded linear iteration under the assumption that $\|\delta_t+u_t\|_\infty$ is bounded.

This shows that the condition $\|\delta_t+u_t\|_\infty>C\epsilon$ cannot be sustained indefinitely for some fixed $C$ (independent of $\epsilon$). Hence, $(\delta_t+u_t)$ process converges to some value below $C\epsilon$ for any path that belongs to the probability one set. Then, we can write $\|\delta_t+u_t\|_\infty<C\epsilon$ for large enough $t$. Since $\epsilon > 0$ is arbitrary, taking $\epsilon \to 0$, we can conclude that $\Delta_t=\delta_t+u_t+v_t \to 0$ almost surely.

Therefore, the process $Q_t$, determined by the algorithm in (\ref{q_alg}), converges almost surely to $Q^*$. 

For item (ii), recall that 
\begin{align*}
Q^*(I,u)=C^*(I,u)+\beta\sum_{I_1} P^*(I_1|I,u)\min_vQ^*(I,v).
\end{align*}
This fixed point equation coincides with the DCOEs for the approximate belief MDP defined in (\ref{N_fixed}) and (\ref{N_1_fixed}). Hence, Using Theorem \ref{robust_bound}, any policy that satisfy $Q^*(I,\gamma^N(I))=\min_uQ^*(I,u)$  we can write
\begin{align*}
E\left[J_\beta(\pi_N^-,\mathcal{T},\gamma^N)-J_\beta^*(\pi_N^-,\mathcal{T})|I_0^N\right]\leq \frac{2\|c\|_\infty }{(1-\beta)}\sum_{t=0}^\infty\beta^t L_t
\end{align*}
 such that
\begin{align*}
L_t:=\sup_{\hat{\gamma}\in\hat{\Gamma}}E_{\pi_0^-}^{\hat{\gamma}}\left[\|P^{\pi_t^-}(X_{t+N}\in\cdot|Y_{[t,t+N]},U_{[t,t+N-1]})-P^{\pi^*}(X_{t+N}\in\cdot|Y_{[t,t+N]},U_{[t,t+N-1]})\|_{TV}\right]
\end{align*}
and $\pi^*$ is the invariant measure on $x_t$ under the exploration policy $\gamma$.
 \end{proof}

A few remarks about the result are now in order:
\begin{remark}\label{onLearningRates}
The learning rates for the standard Q-learning algorithm require:
\begin{itemize}
\item $\sum_k \alpha_k=\infty$
\item $\sum_k \alpha_k^2<\infty$.
\end{itemize}
In our case, we have a particular form. To justify this, we note that although these standard two assumptions on the learning rates may be sufficient for convergence of the algorithm, the limit fixed point equation (if one exists) will not necessarily be useful. Consider the following example where the state space is $\mathds{X}=\{-1,+1\}$ and transitions are deterministic such that $Pr(x_{t+1}=1|x_t=-1)=1$, $Pr(x_{t+1}=-1|x_t=+1)=1$ (leading to a periodic Markov chain). If one chooses the learning rates as $\alpha_{2k}=0$, $\alpha_{2k+1}=\sigma_k$ for every $k$ such that $\sigma_k$ is square summable but not summable, then even though the algorithm will converge, depending on the initial point, one of the transition models will always dominate the other. To avoid such examples, we choose the learning rates to be 'averaging' through time. \hfill $\diamond$
\end{remark}

\begin{remark}
 We caution the reader that our result assumes that the cost starts running after time $N$: that is the effective cost is: 
\begin{align}\label{termi_cost}
E\left[\sum_{k=N}^{\infty}\beta^{k-N}c(x_k,u_k)\right].
\end{align}
Of course, this criterion is also applicable if the system starts running prior to time $-N$ and the costs become in effect after time $0$.

If this criterion is not applicable, and the first $N$ stages are also crucial, (i) if $\beta$ is large enough, we can conclude that the first $N$ stages are not as critical for the analysis as their contributions will be minor in comparison with the future stages for the criterion, which can also be seen by considering this equivalent criterion to (\ref{criterion1}) and noting that for large enough $\beta$, the contributions of the first $N$ time stages become negligible:
\[(1 - \beta)E\left[\sum_{k=0}^{\infty}\beta^{k}c(x_k,u_k)\right].\]

(ii) On the other hand, if $\beta$ is not large and if the cost starts running at time 0, then, we can first run the Q-learning algorithm above to find the best $N$-window policies which optimizes (\ref{termi_cost}). The remaining question would be to optimize:
\begin{align}\label{finiteDPN}
E\left[\sum_{k=0}^{N-1}c(x_k,u_k)+V(I_k)\right]
\end{align}
as a finite-horizon optimal control problem with a terminal cost and the terminal cost $V$ can be estimated by (\ref{termi_cost}) via Theorem \ref{cont_bound} and Theorem \ref{main_thm}. The question then becomes how to select the first $N$ actions, leading to a problem with a finite search complexity for a finite horizon problem, without knowing the system dynamics. For this, one can run a MCMC algorithm in parallel simulations to find the optimal policy for the first $N$ time stages. Since the resulting policy minimizing (\ref{finiteDPN}) will be at least as good as the first $N$-window policy under the optimal (belief-MDP) policy (which is not designed to optimize (\ref{finiteDPN}) but the original cost (\ref{criterion1})), the bounds presented in Theorem \ref{robust_bound} will be applicable even when the cost criterion includes the first $N$ time stages.
\end{remark}

\begin{remark}
For the convergence rate of the Q-learning algorithm (\ref{q_alg}), it is clear that increasing the window size will increase the size of the state space, which in turn will slow down the convergence speed of the iterations. The sample complexity in Q-learning has been studied extensively for various learning rates. For linear learning rates, $\alpha_k=\frac{1}{k}$, as we use in this paper, the conclusion is that (see e.g. \cite{even2003learning}) the sample complexity is in the order of $\frac{(|\mathds{Y}|\times |\mathds{U}|)^{\frac{N}{1-\beta}}}{(1-\beta)^4\epsilon^2}$  when one seeks $\epsilon$ optimality in expectation. Thus, the number of samples needed to get an $\epsilon$-near estimate, in expectation increases exponentially in the window size. However, we note that this is the number of samples needed to get $\epsilon$-near to the limit Q value for the specific window size $N$, which gets closer to the optimal Q value of the original POMDP, exponentially fast with increasing $N$ under suitable filter stability assumptions (see Corollary \ref{cor1} and \ref{cor2}). We also note that for different system parameters, it has been shown that better convergence speeds can be achieved with carefully chosen learning rates (see e.g. \cite{wainwright2019stochastic, li2020sample}). Hence, a learning rate that is adaptive to the window size $N$ and target total approximation error (in near-optimality as well as the error due to sample complexity), can be used for a faster learning. %, however, we leave this for a future study, as it will require significantly different analysis and tools than we use in this paper.
\end{remark}

\section{On the Filter Stability and Convergence to Near Optimality under Filter Stability}\label{secfilterStable}
In this section, we discuss the (uniform filter stability) term ${\it L_t}$ defined in (\ref{loss_constant})
\begin{align*}
L_t:=\sup_{\hat{\gamma}\in\hat{\Gamma}}E_{\pi_0^-}^{\hat{\gamma}}\left[\|P^{\pi_t^-}(X_{t+N}\in\cdot|Y_{[t,t+N]},U_{[t,t+N-1]})-P^{\hat{\pi}}(X_{t+N}\in\cdot|Y_{[t,t+N]},U_{[t,t+N-1]})\|_{TV}\right]
\end{align*}

Before we introduce related definitions and notation, we again emphasize that for the performance of the approximate model introduce in Section \ref{app_section}, $L_t$ plays a crucial role. As we have noted earlier, $L_t$ is a term related to controlled filter stability, a general problem in which one is interested in how fast a process forgets its incorrect initial prior with increasing observation and control variables over time. In particular, any quantitative result for controlled filter stability bounding the errors given by $L_t$ can be used to study the performance of the approximation and the learning algorithm introduced in this paper. 

We state this formally as follows:

\begin{corollary}
Suppose the following assumptions hold:
\begin{itemize}
\item Assumption \ref{partial_q} holds.
\item Under the exploring policy, $\gamma$, the state process $\{x_t\}_t$ is irreducible.
\item The POMDP is such that the filter is stable uniformly over priors in expectation under total variation, that is $L_t \to 0$ as $N\to \infty$. 
\end{itemize}
Then, for any policy $\gamma^N$ that satisfies $Q^*(I,\gamma^N(I))=\min_uQ^*(I,u)$, if we assume that the controller starts using $\gamma^N$ at time $t=N$ (after observing at least $N$ information variables), then denoting the prior distribution of $X_N$ by $\pi_N^-$, conditioned on the first $N$ step information variables we have
\begin{align*}
E\left[J_\beta(\pi_N^-,\mathcal{T},\gamma^N)-J_\beta^*(\pi_N^-,\mathcal{T})|I_0^N\right] \to 0
\end{align*}
as $N\to \infty$.
\end{corollary}
\begin{proof}
The result follows directly from Theorem \ref{main_thm} and the definition of $L_t$ (see \ref{loss_constant})
\end{proof}

Notably, directly related to $L_t$, recent results in the literature, in particular \cite[Theorem 3.6]{MYRobustControlledFS} and  \cite[Theorem 3.3]{mcdonald2020exponential}, have presented explicit and sufficient conditions on the controlled filter stability problem under the total variation metric in expectation.

We will first focus on \cite[Theorem 3.3]{mcdonald2020exponential} and recall the Dobrushin coefficient for Markov kernels, which will provide exponential convergence rates for $L_t$.

\begin{definition}
For a given prior measure $\mu$ on $X_0$ and a policy $\gamma$, the one step predictor process is defined as the sequence of conditional probability measures 
\begin{align*}
\pi_{t-}^{\mu,\gamma}(\cdot)&=P^{\mu,\gamma}(X_t\in \cdot|Y_{[0,t-1]},U_{[t-1]}=\gamma(Y_{[0,t-1]}))
\end{align*}
where $P^{\mu,\gamma}$ is the probability measure induced by the prior $\mu$ and the policy $\gamma$, when $\mu$ is the probability measure on $X_0$.
\end{definition}

\begin{definition}
The filter process is defined as the sequence of conditional probability measures 
\begin{align}\label{filter}
\pi_{t}^{\mu,\gamma}(\cdot)&=P^{\mu,\gamma}(X_t\in \cdot|Y_{[0,t]},U_{[t-1]}=\gamma(Y_{[0,t-1]}))
\end{align}
where $P^{\mu,\gamma}$ is the probability measure induced by the prior $\mu$ and the policy $\gamma$.
\end{definition}

%\begin{definition}
%Suppose that there exists a dominating measure $\lambda \in \P(\mathds{Y})$ and for every $x\in\mathds{X}$, $Q(\cdot|x)\ll\lambda(\cdot)$ that is $Q(\cdot|x)$ is absolutely continuous with respect to $\lambda$ for every $x\in\mathds{X}$, then we say the channel $Q$ is dominated.
%\end{definition}
%\adk{
%\begin{remark}
%In finite or countable spaces this always holds that is if $\mathds{Y}$ is finite or countable the channel $Q$ is always dominated: by taking $\lambda(dy) = \sum_{y_i \in \mathds{Y}} 2^{-i}\delta_{y_i}(dy)$.

%Although, in this paper we only consider finite observation spaces $\mathds{Y}$, the results presented in this section holds for more general spaces and hence we keep the statements as they are such that they consider dominated channels.
%\end{remark}
%}

%\begin{definition}
%For a stochastic kernel operator $K: S_1 \to \P(S_2)$, we define the Dobrushin coefficient as (\adk{$S_1$ $S_2$???})
%\begin{align*}
%\delta(K)=\inf \sum_{i=1}^{n}\min \adk{complete here}
%\end{align*}
%\end{definition}
%

\begin{definition}\cite[Equation 1.16]{dobrushin1956central}
For a kernel operator $K:S_{1} \to \mathcal{P}(S_{2})$ (that is a regular conditional probability from $S_1$ to $S_2$) for standard Borel spaces $S_1, S_2$, we define the Dobrushin coefficient as:
\begin{align}
\delta(K)&=\inf\sum_{i=1}^{n}\min(K(x,A_{i}),K(y,A_{i}))\label{Dob_def}
\end{align}
where the infimum is over all $x,y \in S_{1}$ and all partitions $\{A_{i}\}_{i=1}^{n}$ of $S_{2}$.
\end{definition}
We note that this definition holds for continuous or finite/countable spaces $S_{1}$ and $S_{2}$ and $0\leq \delta(K)\leq 1$ for any kernel operator. 
\begin{example}\label{doub_exmp}
Assume for a finite setup, we have the following stochastic transition matrix
\begin{equation*}
K = 
\begin{pmatrix}
\frac{1}{3} & \frac{1}{3}  & \frac{1}{3} \\
0 & \frac{1}{2} & \frac{1}{2} \\
\frac{3}{4} & 0 & \frac{1}{4}
\end{pmatrix}
\end{equation*}
The Dobrushin coefficient is the minimum over any two rows where we sum the minimum elements among those rows. For this example, the first and the second rows give $\frac{2}{3}$, the first and the third rows give $\frac{7}{12}$ and the second and the third rows give $\frac{1}{4}$. Then the  Dobrushin coefficient is $\frac{1}{4}$.
\end{example}

Let
\[ \tilde{\delta}({\cal T}):=\inf_{u \in \mathds{U}} \delta({\cal T}(\cdot|\cdot,u)). \]
%then the result for a controlled model follows immediately from the proof of Theorem \ref{mixed_Dobrushin}.
%

%Having these definitions, we now present the main result of this section which is taken from \cite{mcdonald2020exponential}. 

\begin{theorem} \cite[Theorem 3.3]{mcdonald2020exponential} \label{curtis_thm}
Assume that for $\mu,\nu \in \P(\mathds{X})$, we have $\mu\ll\nu$. Then we have
\begin{align*}
E^{\mu,\gamma}\left[\|\pi_{n+1}^{\mu,\gamma}-\pi_{n+1}^{\nu,\gamma}\|_{TV}\right]\leq (1-\tilde{\delta}(\mathcal{T}))(2-\delta(Q))E^{\mu,\gamma}\left[\|\pi_{n}^{\mu,\gamma}-\pi_{n}^{\nu,\gamma}\|_{TV}\right].
\end{align*}
In particular, defining $\alpha:=(1-\tilde{\delta}(\mathcal{T}))(2-\delta(Q))$, we have
\begin{align*}
E^{\mu,\gamma}\left[\|\pi_{n}^{\mu,\gamma}-\pi_{n}^{\nu,\gamma}\|_{TV}\right]\leq 2\alpha^n.
\end{align*}
\end{theorem}

%\begin{remark}
%While not explicitly mentioned in \citet{mcdonald2020exponential}, the result above holds uniformly over all $\nu\ll\mu$ and pointwise for every $y_{[0,t]},u_{[0,t-1]}$. This follows immediately from \citet[Remark 3.3 and Proof of Theorem 3.3]{mcdonald2020exponential}. However, since we assume that $\mathds{Y}$ and $\mathds{U}$ are finite, we can guarantee the convergence uniform over all paths of the form $y_{[0,t]},u_{[0,t-1]}$ and hence use this result to obtain convergence rate for $L$: This follows since for (almost every realization taken from a set of measure $1$) $y_{[0,\infty)},u_{[0,\infty)}$, for every $\epsilon > 0$, there exists $T_{\epsilon}$ (which may depend on $y_{[0,\infty)},u_{[0,\infty)}$) so that for $t \geq T_{\epsilon}$
%\[\rho^t\|P^\mu(X_t\in\cdot|y_{[0,t]},u_{[0,t-1]})-P^\nu(X_t\in\cdot|y_{[0,t]},u_{[0,t-1]})\|_{TV} \leq \epsilon\]
%However, up to time $t$, there are only finitely many realizations and thus one can define an equivalence class among all events with respect to their induced finite window realizations (leading to a finite partition for each fixed $t$) and consider a single bound for each equivalence class, which then leads to taking the maximum over such finite window realizations for each fixed $t$. \hfill $\diamond$
%\end{remark}

The absolute continuity assumption, that is $\mu\ll\nu$, can be interpreted as follows: assume that the true starting distribtion is $\mu$ but we start the update from an incorrect prior $\nu$. The error can be fixed with the information, $y_{[0,t]},u_{[0,t-1]}$ eventually, as long as, the incorrect starting distribution $\nu$, puts on a positive measure to every event that the real starting distribution $\mu$ puts on a positive measure. However, if it is not the case, that is, if the incorrect starting distribution $\nu$ puts $0$ measure to some event, that $\mu$, puts positive measure to, information variables are not sufficient to fix the starting error occurring from that $0$ measure event. Of course this would not be feasible as the prior would not be compatible with the measured data. In any case, in our setup, the incorrect prior serves as an approximation and this ca be made to satisfy the absolute continuity condition by design: this will be the invariant measure on the state under the exploration policy.

Recall that the Q learning iteration that uses finite window information variables, learns the Q values for approximate states of the form $(\pi^*, I_t^N)$, instead of the true states $(\pi_{t-N}^-,I_t^N)$. Theorem \ref{curtis_thm} suggests that the approximation error arising from using the stationary distribution, $\pi^*$, instead of $\pi_{t-N}^-$, can be fixed with the information variables $I_t^N$, if $\pi^*$ captures the non zero events of $\pi_{t-N}^-$, that is if $\pi_{t-N}^-\ll\pi^*$.

In particular, since $\tilde{\delta}(\mathcal{T})$ is a uniform Dobrushin coefficient over all control actions, the above bound is valid under any control action process. Thus, if $\pi_{t}^-\ll\pi^*$ for all $t$, then we can write
\begin{align}\label{curtis_bound}
L_t&=\sup_{\hat{\gamma}\in\hat{\Gamma}}E_{\pi_0^-}^{\hat{\gamma}}\left[\|P^{\pi_t^-}(X_{t+N}\in\cdot|Y_{[t,t+N]},U_{[t,t+N-1]})-P^{\hat{\pi}}(X_{t+N}\in\cdot|Y_{[t,t+N]},U_{[t,t+N-1]})\|_{TV}\right]\nonumber\\
&\leq 2\alpha^N
\end{align}
for all $t$.

\begin{corollary}[to Theorem \ref{main_thm} and \ref{curtis_thm}]\label{cor1}
Assume the following holds:
\begin{itemize}
\item Assumption \ref{partial_q} holds.
\item The state space, $\mathds{X}$, is finite.
\item Under the exploring policy, $\gamma$, the state process $\{x_t\}_t$ is irreducible.
\item $\alpha:=(1-\tilde{\delta}(\mathcal{T}))(2-\delta(O))<1$.
\end{itemize}
Then, for any policy $\gamma^N$ that satisfies $Q^*(I,\gamma^N(I))=\min_uQ^*(I,u)$, if we assume that the controller starts using $\gamma^N$ at time $t=N$ (after observing at least $N$ information variables), then denoting the prior distribution of $X_N$ by $\pi_N^-$, conditioned on the first $N$ step information variables we have
\begin{align*}
E\left[J_\beta(\pi_N^-,\mathcal{T},\gamma^N)-J_\beta^*(\pi_N^-,\mathcal{T})|I_0^N\right]\leq \frac{4\|c\|_\infty }{(1-\beta)^2}\alpha^N.
\end{align*}
\end{corollary}
\begin{proof}{Proof.}
Note that, by Theorem \ref{main_thm},
\begin{align*}
E\left[J_\beta(\pi_N^-,\mathcal{T},\gamma^N)-J_\beta^*(\pi_N^-,\mathcal{T})|I_0^N\right]\leq \frac{2\|c\|_\infty }{(1-\beta)}\sum_{t=0}^\infty\beta^t L_t
\end{align*}

If the state process $x_t$ is irreducible under the exploring policy, then by Kac's Lemma (\cite{kac}), we have that 
\begin{align*}
\pi^*(x)>0, \quad \forall x\in\mathds{X}.
\end{align*}
Hence, using the inequality  (\ref{curtis_bound}), we complete the proof.
\end{proof}

\begin{corollary}[to Theorem \ref{main_thm} and \ref{curtis_thm}]\label{cor2}
Assume the following holds:
\begin{itemize}
\item Assumption \ref{partial_q} holds.
\item $\mathds{X}\subset \mathds{R}^m$ for some $m<\infty$.
\item The transition kernel $\mathcal{T}(\cdot|x_0,u_0)$ admits a density function $f$ with respect to a measure $\phi$ such that $\mathcal{T}(dx_1|x_0,u_0)=f(x_1,x_0,u_0)\phi(dx_1)$ and $f(x_1,x_0,u_0)>0$ for all $x_1,x_0,u_0$.
\item $\alpha:=(1-\tilde{\delta}(\mathcal{T}))(2-\delta(O))<1$.
\end{itemize}
Then, for any policy $\gamma^N$ that satisfies $Q^*(I,\gamma^N(I))=\min_uQ^*(I,u)$, if we assume that the controller starts using $\gamma^N$ at time $t=N$ (after observing at least $N$ information variables), then denoting the prior distribution of $X_N$ by $\pi_N^-$, conditioned on the first $N$ step information variables we have
\begin{align*}
E\left[J_\beta(\pi_N^-,\mathcal{T},\gamma^N)-J_\beta^*(\pi_N^-,\mathcal{T})|I_0^N\right]\leq \frac{4\|c\|_\infty }{(1-\beta)^2}\alpha^N.
\end{align*}
\end{corollary}
\begin{proof}{Proof.}
Note that, by assumption  $\mathcal{T}(dx_1|x_0,u_0)=f(x_1,x_0,u_0)\phi(dx_1)$ and $f(x_1,x_0,u_0)>0$ for all $x_1,x_0,u_0$ and hence, under the exploration policy $\gamma$, the state process $x_t$ is $\phi$-irreducible and admits a unique invariant measure, say $\pi^*$. Using the assumptions, we can also write that for any $A\in\B(\mathds{X})$ with $\phi(A)>0$
\begin{align*}
\pi^*(A)=\int_\mathds{Z}\int_A \int_{\mathds{U}}f(x_1,x_0,u_0)\gamma(du_0)\phi(dx_1)\pi^*(dx_0)>0
\end{align*}
which implies that $\phi\ll\pi^*$. Note that the transition kernel $\mathcal{T}(\cdot|x,u)$ is absolutely continuous with respect to $\phi$ for every $(x,u)$, and thus, for the predictor $\pi_t^-$ at any time step $t$, we can write that $\pi_t^-\ll\phi\ll\pi^*$.  

Hence, inequality (\ref{curtis_bound}) and Theorem \ref{main_thm} concludes the proof.
\end{proof}

%We note that one can find corresponding conditions via \cite[Theorem 3.6]{MYRobustControlledFS} and apply variations to Corollaries \ref{cor1} and \ref{cor2}, which however will not lead to exponential convergence bounds as the filter stability terms obtained via martingale methods (utilized in \cite{MYRobustControlledFS}) do not provide such exponential convergence guarantees. Explicit examples are presented in \cite[Section 3]{mcdonald2018stability}.

\section{Numerical Study}\label{num_study}

In this section, we present a numerical study for the proven results.

The example we use is a machine repair problem. In this model, we have $\mathds{X,Y,U}=\{0,1\}$ with
\begin{align*}
x_t=&\begin{cases}1 \quad \text{ machine is working at time t }\\
0  \quad \text{ machine is not working at time t }.\end{cases}
u_t=&\begin{cases}1 \quad \text{ machine is being repaired at time t }\\
0  \quad \text{ machine is not being repaired at time t }.\end{cases}
\end{align*}
The probability that the repair was successful given initially the machine was not working is given by $\kappa$:
\begin{align*}
Pr(x_{t+1}=1|x_t=0,u_t=1)=\kappa
\end{align*}
The probability that the machine breaks down while in a working state is given by $\theta$:
\begin{align*}
Pr(x_t=0|x_t=1,u_t=0)=\theta
\end{align*}
The probability that the channel gives an incorrect measurement is given by $\epsilon$:
\begin{align*}
Pr(y_t=1|x_t=0)=Pr(y_t=0|x_t=1)=\epsilon
\end{align*}
The one stage cost function is given by
\begin{align*}
c(x,u)=&\begin{cases}R+E \quad  &x=0,u=1 \\
E  \quad  &x=0, u=0 \\
0 \quad &x=1,u=0\\
R \quad &x=1, u=1\end{cases}
\end{align*}
where $R$ is the cost of repair and 
$E$ is the cost incurred by a broken machine.

We study the example with discount factor $\beta=0.8$,  and present three different results by changing the other parameters. 

{\bf First example.} For the first case, we take $\epsilon=0.3$, $\kappa=0.8$, $\theta=0.1$ and, $R=5, E=1$. For the exploring policy, we use a random policy such that $Pr(\gamma(x)=0)=\frac{1}{2}$ and $Pr(\gamma(x)=1)=\frac{1}{2}$ for all $x$. Under this policy, $x_t$ admits a stationary policy $\pi^*(\cdot)=0.1\delta_{0}(\cdot)+0.9\delta_{1}(\cdot)$.

We have proved in Theorem \ref{main_thm} that the Q iteration given by (\ref{q_alg}) converges to the Q-values of the approximate belief-MDP defined in (\ref{N_fixed}). Defining
\begin{align*}
V_t(I)&:=\min_{v\in\mathds{U}}Q_t(I,v),
\end{align*}
in the next graphs, we plot $\sup_{I}|V_t(I)-J_\beta^N(\pi^*,I)|$ for $N=0,1,2$:
\begin{center}
\includegraphics[scale=0.5]{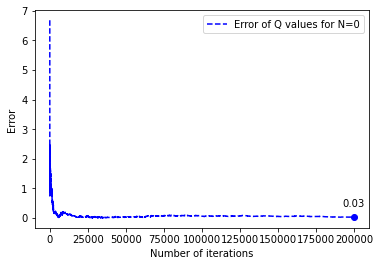}\includegraphics[scale=0.5]{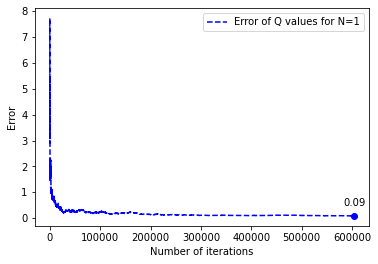}
\includegraphics[scale=0.5]{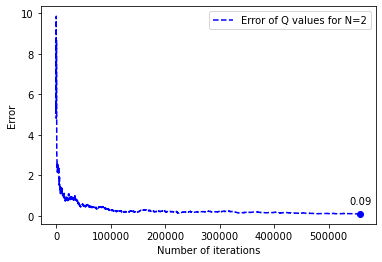}
\end{center}
We now show the performance of $\gamma^N$ that are found using the Q-values for different values of $N$. Recall that in Theorem \ref{main_thm}, we have showed that  
$$E\left[J_\beta(\pi_N^-,\mathcal{T},\gamma^N)-J_\beta^*(\pi_N^-,\mathcal{T})|I_0^N\right]\leq \frac{2\|c\|_\infty }{(1-\beta)}\sum_{t=0}^\infty\beta^t L_t$$ where
\begin{align*}
L_t:=\sup_{\hat{\gamma}\in\hat{\Gamma}}E_{\pi_0^-}^{\hat{\gamma}}\left[\|P^{\pi_t^-}(X_{t+N}\in\cdot|Y_{[t,t+N]},U_{[t,t+N-1]})-P^{\pi^*}(X_{t+N}\in\cdot|Y_{[t,t+N]},U_{[t,t+N-1]})\|_{TV}\right]
\end{align*} 

For the examples, we use the following upper bound for $L_t$
\begin{align*}
L:=\sup_{\pi\in\P(\mathds{X})}\sup_{\hat{\gamma}\in\hat{\Gamma}}E_{\pi}^{\hat{\gamma}}\left[\|P^{\pi}(X_{N}\in\cdot|Y_{[0,N]},U_{[0,N-1]})-P^{\pi^*}(X_{N}\in\cdot|Y_{[0,N]},U_{[0,N-1]})\|_{TV}\right]
\end{align*}
such that 
$$E\left[J_\beta(\pi_N^-,\mathcal{T},\gamma^N)-J_\beta^*(\pi_N^-,\mathcal{T})|I_0^N\right]\leq \frac{2\|c\|_\infty }{(1-\beta)^2}L$$

In the following, to estimate $J_\beta^*(\mu,\mathcal{T})$, we simply use the smallest value of $J_\beta(\mu,\mathcal{T},\gamma^N)$ among the different $N$ values. Furthermore, we scale the $L$ values to show the rate dependence between $J_\beta(\mu,\mathcal{T},\gamma^N)-J_\beta^*(\mu,\mathcal{T})$ and $L$ more clearly:
\begin{center}
\includegraphics[scale=0.6]{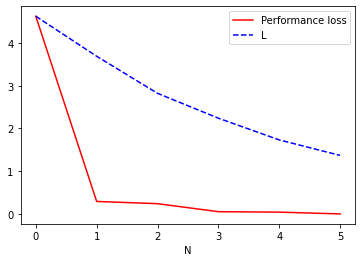}
\end{center}
It is clearly seen that the decrease rate for $L$ dominates the decrease rate for the error.

{\bf Second Example.} For the second case, we take $\epsilon=0.1$, $\kappa=0.9$, $\theta=0.3$  and, $R=5, E=1$. For the exploring policy, we again use a random policy such that $Pr(\gamma(x)=0)=\frac{1}{2}$ and $Pr(\gamma(x)=1)=\frac{1}{2}$ for all $x$. Under this policy, $x_t$ admits a stationary policy $\pi^*(\cdot)=0.29\delta_{0}(\cdot)+0.71\delta_{1}(\cdot)$.

The following shows the error between $V_t(I)=\min_vQ_t(I,v)$ and $J_\beta^N(\pi^*,I)$ for $N=0,1,2$:
\begin{center}
\includegraphics[scale=0.5]{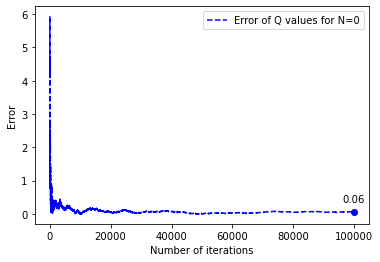}\includegraphics[scale=0.5]{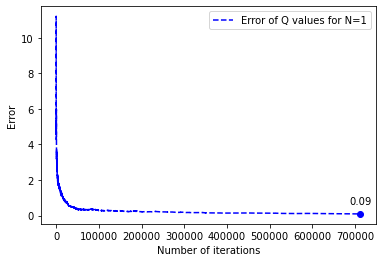}
\includegraphics[scale=0.5]{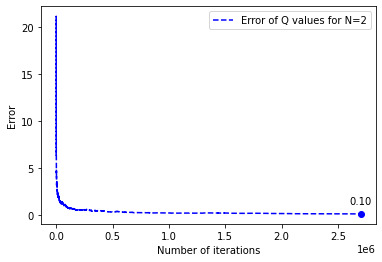}
\end{center}

The next graph shows $J_\beta(\mu,\mathcal{T},\gamma^N)-J_\beta^*(\mu,\mathcal{T})$ and scaled $L$:
\begin{center}
\includegraphics[scale=0.6]{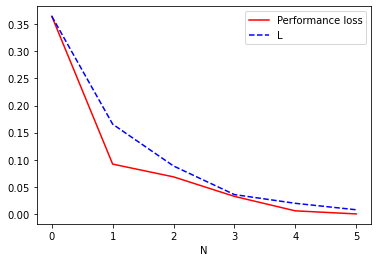}
\end{center}

{\bf Third Example: } Note that for the previous examples, we had $\alpha:=(1-\tilde{\delta}(\mathcal{T}))(2-\delta(O))>1$, however, the error still decreases since $\alpha>1$ condition is only a sufficient condition and that the error still converges to $0$, even when $\alpha>1$ in some cases. For the last example, we set parameters so that $\alpha<1$. The parameters are chosen as follows:
\begin{align*}
Pr(x_1=0|x_0=0,u_0=0)&=0.9, \quad Pr(x_1=0|x_0=0,u_0=1)=0.6\\
Pr(x_1=0|x_0=1,u_0=0)&=0.4,\quad Pr(x_1=0|x_0=1,u_0=1)=0.1
\end{align*}
Notice that, we manipulated some of the parameters, to make the $\alpha$ coefficient suitable for the purpose of the example.
For the measurement channel:
\begin{align*}
Pr(y=0|x=0)=0.7,\quad Pr(y=1|x=1)=0.7.
\end{align*}
For the cost function, we choose $R=3$, and $E=1$.

 We again use a random policy such that $Pr(\gamma(x)=0)=\frac{1}{2}$ and $Pr(\gamma(x)=1)=\frac{1}{2}$ for all $x$. Under this policy, $x_t$ admits a stationary policy $\pi^*(\cdot)=0.42\delta_{0}(\cdot)+0.58\delta_{1}(\cdot)$.

The convergence of the Q-values can be seen as:
\begin{center}
\includegraphics[scale=0.5]{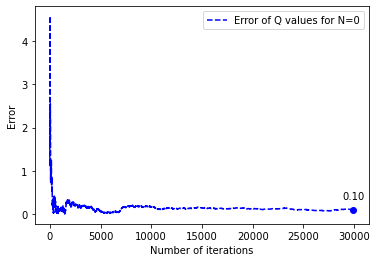}\includegraphics[scale=0.5]{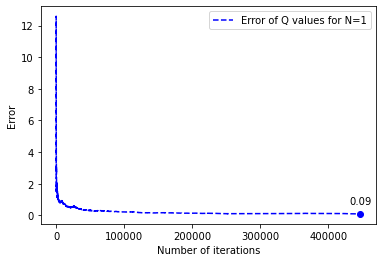}
\includegraphics[scale=0.5]{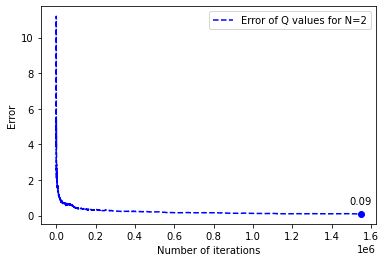}
\end{center}

This setup gives $\alpha=0.7$. The following graph shows the error $J_\beta(\mu,\mathcal{T},\gamma^N)-J_\beta^*(\mu,\mathcal{T})$, $L$ and $\alpha^N$ terms. We scale all of them to make them start from the same point to emphasize the decrease rates.
\begin{center}
\includegraphics[scale=0.6]{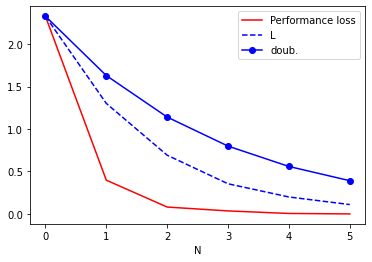}
\end{center}

\section{Concluding Remarks and a Discussion}
We studied the convergence of an approximate Q learning algorithm for partially observed stochastic control systems that uses finite window history variables. We provided sufficient conditions that guarantee the algorithm to converge, and then we provided the approximate belief-MDP model that the limit fixed equation corresponds to. Furthermore, we provided bounds for the approximate policy that is learned with the proposed algorithm in comparison to the true optimal policy that could be designed if the system and channel were known apriori. In particular, we obtained explicit error bounds between the resulting policy's performance and the optimal performance as a function of the memory length and a coefficient related to filter stability.

The setup we used for this paper focused on continuous state space and finite observation and action spaces. An immediate future direction is for continuous observation and action spaces, in which case, continuity properties of the transition model $\mathcal{T}(\cdot|x,u)$ on $u$ and continuity properties of the channel $O(dy|x)$ on $x$ is crucial and sufficient for consistent discretization of the observation and action spaces leading to analogous stability results. One condition, for example, would be that the channel be of the form $\int_A O(dy|x)=\int_A f(x,y)dy$ for all Borel $A$ with $f$ continuous in both variables. 

It is also our goal to generalize such results to multi-agent problems where finite history policies will likely lead to new insights towards tractable solutions in both stochastic team theory and game theory. 
%We leave this case for future work. %The techniques used in this paper could be used to establish consistent approximation bounds for quantization of observation and action spaces.

\newpage

\appendix
\section{Proof of $v_t(I,u)\to 0$}\label{v_kto0}
\hfill

We will show that $v_t(I,u)\to 0$ almost surely for all $(I,u)$. We prove the claim only for $N=1$ case for simplicity and let $I=(y_1,y_0,u_0)$ and $u=u_1$ for some $(y_1,y_0,u_0,u_1)\in\mathds{Y}^2\times\mathds{U}^2$. The proof for general $N$ follows from essentially same steps. We have
\begin{align*}
v_{t+1}(I,u)&=(1-\alpha_t(I,u))v_t(I,u)+\alpha_t(I,u) r^*_t(I,u).
\end{align*}
When the learning rates are chosen such that 
$\alpha_t(I,u)=0$ unless $(I_t,U_t)=(I,u)$, and,
\[\alpha_t(I,u) = {1 \over 1+ \sum_{k=0}^{t} 1_{\{I_k=I, U_k=u\}} }\]
this term reduces to
\begin{align*}
v_{t+1}(I,u)=\frac{\sum_{k=0}^{t-1} r^*_{k}(I,u) \mathds{1}_{\{I_k,U_k=I,u\}}}{\sum_{k=0}^{t-1} \mathds{1}_{\{I_k,U_k=I,u\}}}.
\end{align*}
Recall that
\begin{align*}
r_k^*(I,u)=\beta V^*(I^k_1)-\beta \sum_{I_1}V^*(I_1)P^*(I_1|I,u) + C_k(I,u)-C^*(I,u).
\end{align*}
Hence, we will first analyze the asymptotic behavior of 
\begin{align*}
&I_1^k:=(Y_{k+1},Y_{k},U_{k}),\\
&I_k,U_k=(Y_k,Y_{k-1},U_k,U_{k-1}).
\end{align*}
 To analyze the asymptotic behavior of these variables, we will make use of the Markov chain theory. We will show that 
\begin{align*}
I_1^k,I_k,U_k,X_{k-1}=\{Y_{k+1},Y_k,Y_{k-1},U_k,U_{k-1},X_{k-1}\}
\end{align*}
form a Markov chain under the exploration policy $\gamma$. Then, we will use stationary distribution of this Markov chain to analyze the asymptotic behavior of $(I_1^k,I_k,U_k)$. We write
\begin{align*}
&Pr(Y_{k+1},Y_{k},Y_{k-1},U_{k},U_{k-1},X_{k-1}|y_k,y_{k-1},\dots,y_0,u_{k-1},\dots,u_0,x_{k-2},\dots,x_0)\\
&=\mathds{1}_{\{Y_k,Y_{k-1},U_{k-1}=y_k,y_{k-1},u_{k-1}\}}Pr(Y_{k+1}|y_k,x_{k-1},u_k,u_{k-1})Pr(X_{k-1}|y_{k-1},x_{k-2},u_{k-2})\gamma(U_k)\\
&=Pr(Y_{k+1},Y_k,U_k,X_{k-1},U_{k-2}|y_k,y_{k-1},x_{k-2},u_{k-1},u_{k-2})
\end{align*}
where we use $\gamma$ as the probability measure of the exploring policy. Above, we used that 
\begin{align*}
&Pr(Y_{k+1}|y_k,\dots,y_0,x_{k-1},\dots,x_0,\dots,u_{k},\dots,u_0)=Pr(Y_{k+1}|y_k,x_{k-1},u_k,u_{k-1})\\
&Pr(X_{k-1}|y_{k-1},\dots,y_0,x_{k-2},\dots,x_0,u_{k-2}\dots,u_0)=Pr(X_{k-1}|y_{k-1},x_{k-2},u_{k-2}).
\end{align*}
Hence, the joint process $(Y_{k+1},Y_k,Y_{k-1},U_k,U_{k-1},X_{k-1})$ is a Markov chain. One can show that it has a unique invariant measure under the assumption that the state process $X_k$ admits a unique invariant measure under the exploration policy $\gamma$. 

We denote the stationary distribution of $X_k$ by $\pi^*$, and denote the probability measure induced on the joint process and marginals of the process by this stationary distribution by $P^{\pi^*}$ with an abuse of notation. Then, for any measurable function $f$
\begin{align*}
&\lim_{t\to\infty}\frac{1}{t}\sum_{k=0}^{t-1}f(I_1^{k},I_k,U_k,X_{k})=\int f(y'_2,y'_1,y'_0,u'_1,u'_0,x'_0)P^{\pi^*}(dx'_0,y'_0,y'_1,y'_2,u'_0,u'_1).
\end{align*}

In particular, we have that
\begin{align*}
&\lim_{t\to\infty}\frac{\frac{1}{t}\sum_{k=0}^{t-1} V^*_{k}(I,u) \mathds{1}_{\{I_k,u_k=I,u\}}}{\frac{1}{t}\sum_{k=0}^{t-1} \mathds{1}_{\{I_k,u_k=I,u\}}}=\frac{\int V^*(y'_2,y'_1,u'_1)\mathds{1}_{\{y'_1,y'_0,u'_1,u'_0=y_1,y_0,u_1,u_0\}}P^{\pi^*}(y'_0,y'_1,y'_2,u'_0,u'_1)}{\int \mathds{1}_{\{y'_1,y'_0,u'_1,u'_0=y_1,y_0,u_1,u_0\}}P^{\pi^*}(y'_0,y'_1,y'_2,u'_0,u'_1)}\\
&=\frac{\int_{y'_2} V^*(y'_2,y_1,u_1)P^{\pi^*}(y'_2,y'_1=y_1,y'_0=y_0,u'_1=u_1,u'_0=u_0)}{P^{\pi^*}(y'_1=y_1,y'_0=y_0,u'_1=u_1,u'_0=u_0)}\\
&=\sum_{\mathds{Y}}V^*(y'_2,y_1,u_1)P^{\pi^*}(y'_2|y_1,y_0,u_1,u_0) \\
&=\sum_{I_1}V^*(I_1)P^{*}(I_1|I,u)
\end{align*}
where $P^{\pi^*}(y_2|y_1,y_0,u_1,u_0)$ is the distribution of $y_2$ when the $x_0$'s marginal distribution is given by $\pi^*$ and $$P^{*}(I_1=(y_2,y_1',u_1')|I,u):=\mathds{1}_{\{y_1'=y_1,u_1'=u_1\}}P^{\pi^*}(y_2|y_1,y_0,u_1,u_0)$$
as defined in (\ref{inv_kernel}) and (\ref{inv_kernel_1}).

Using similar arguments, one can also show that for $I=(y_1,y_0,u_0)$ and $u=u_1$
\begin{align*}
&\lim_{k\to\infty}\frac{1}{k}\sum_{k'=0}^{k-1}C_{k'}(I,u)=\int_{\mathds{X}} c(x_1,u_1)P^{\pi^*}(dx_1|y_1,y_0,u_0)\\
&=C^*(y_1,y_0,u_0,u_1)=C^*(I,u).
\end{align*}

Thus, we have that
\begin{align*}
v_{t+1}(I,u)&=\frac{1}{t}\sum_{k=0}^{t-1} r^*_{k}(I,u)\to 0
\end{align*}
almost surely for all $(I,u)$.

%\section{Supporting Lemma}
%\begin{lemma}\label{scaled_stoc_app}[\cite[Lemma 3]{jaakkola1994convergence}]
%A stochastic process $X_{t+1}(x)=(1-\alpha_t(x))X_t(x)+\gamma\beta_t(x)\|X_t\|$ converges to 0 almost surely if
%\begin{itemize}
%\item $x\in\mathds{X}$ where $\mathds{X}$ is finite.
%\item  $\sum_t \alpha_t(x)=\infty$, $\sum_t \alpha_t^2(x)<\infty$, $\sum_t \beta_t(x)=\infty$, $\sum_t \beta_t^2(x)<\infty$, and $E[\beta_t(x)|P_t]\leq E[\alpha_t(x)|P_t]$ uniformly almost surely.
%\end{itemize}
%\end{lemma}

\section{Proof of Theorem \ref{cont_bound}}\label{proof_cont_bound}

\begin{lemma}\label{y_bound}
We have that for any $\pi,\pi^*\in\P(\mathds{X})$ and for any $(y,u)_{[t,t-N]}:=\{y_t,\dots,y_{t-N},u_t,\dots,u_{[t-N]}\}\in\mathds{Y}^N\times\mathds{U}^N$
\begin{align*}
&\|P^{\pi}(Y_{t+1}\in\cdot|(y,u)_{[t,t-N]})-P^{\pi^*}(Y_{t+1}\in\cdot|(y,u)_{[t,t-N]})\|_{TV}\\
&\quad \leq \|P^{\pi}(X_t\in\cdot|y_{[t,t-N]},u_{[t-1,t-N]})-P^{\pi^*}(X_t\in\cdot|y_{[t,t-N]},u_{[t-1,t-N]})\|_{TV}
\end{align*}
\end{lemma}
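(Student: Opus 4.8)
The plan is to prove this by relating the one-step-ahead observation distribution $P^{\pi}(Y_{t+1}\in\cdot\,|(y,u)_{[t,t-N]})$ to the filtered state distribution $P^{\pi}(X_t\in\cdot\,|y_{[t,t-N]},u_{[t-1,t-N]})$ via a fixed stochastic kernel that does not depend on the prior. First I would observe that, conditioned on $X_t$ and on $u_t$ (which is part of the conditioning data $(y,u)_{[t,t-N]}$), the law of $Y_{t+1}$ is $\int_{\mathds{X}} O(\cdot|x_{t+1})\mathcal{T}(dx_{t+1}|x_t,u_t)$, and crucially this expression is independent of the prior on $X_0$ and of all the earlier observations/actions. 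Define the kernel $R_{u_t}(\cdot|x_t):=\int_{\mathds{X}} O(\cdot|x_{t+1})\mathcal{T}(dx_{t+1}|x_t,u_t)$ from $\mathds{X}$ to $\mathds{Y}$. Then I would write, for each prior $\pi$,
\[
P^{\pi}(Y_{t+1}\in\cdot\,|(y,u)_{[t,t-N]}) = \int_{\mathds{X}} R_{u_t}(\cdot|x_t)\, P^{\pi}(dx_t\,|y_{[t,t-N]},u_{[t-1,t-N]}),
\]
i.e. the predictive observation law is the image of the filter under the same kernel $R_{u_t}$ for both $\pi$ and $\pi^*$.

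The second step is to invoke the standard data-processing (contraction) inequality for total variation under a stochastic kernel: if $\mu,\nu\in\P(\mathds{X})$ and $K$ is any stochastic kernel from $\mathds{X}$ to $\mathds{Y}$, then $\|\mu K - \nu K\|_{TV}\le\|\mu-\nu\|_{TV}$. Applying this with $\mu = P^{\pi}(X_t\in\cdot\,|\cdot)$, $\nu = P^{\pi^*}(X_t\in\cdot\,|\cdot)$ (same conditioning data) and $K=R_{u_t}$ gives exactly the claimed bound. Since $u_t$ is determined by the conditioning data, there is no issue with the choice of kernel.

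I do not expect a serious obstacle here; this lemma is essentially a bookkeeping statement combined with the elementary TV-contraction property of channels. The one point requiring a little care is the first step — verifying that the conditional law of $Y_{t+1}$ given the finite window $(y,u)_{[t,t-N]}$ really does factor through the filter $P^{\pi}(X_t\in\cdot\,|y_{[t,t-N]},u_{[t-1,t-N]})$ and the fixed kernel $R_{u_t}$, with no residual dependence on $\pi$ beyond what enters through that filter. This follows from the Markov structure: given $X_t$, the future $(X_{t+1},Y_{t+1})$ is conditionally independent of the past observations and actions, so
\[
P^{\pi}(Y_{t+1}\in B\,|(y,u)_{[t,t-N]}) = \int_{\mathds{X}}\left(\int_{\mathds{X}} O(B|x_{t+1})\mathcal{T}(dx_{t+1}|x_t,u_t)\right)P^{\pi}(dx_t\,|y_{[t,t-N]},u_{[t-1,t-N]}),
\]
which is precisely $\mu K$ with $\mu$ the filter and $K=R_{u_t}$. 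Once this identity is in hand, the TV-contraction bound closes the argument immediately.
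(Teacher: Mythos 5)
Your proposal is correct and follows essentially the same route as the paper: the paper also writes $P^{\pi}(Y_{t+1}\in\cdot\,|(y,u)_{[t,t-N]})$ as the image of the filter $P^{\pi}(X_t\in\cdot\,|y_{[t,t-N]},u_{[t-1,t-N]})$ under the prior-independent kernel $\int O(\cdot|x_{t+1})\mathcal{T}(dx_{t+1}|x_t,u_t)$ and then bounds the difference in total variation. The only cosmetic difference is that where you cite the data-processing (contraction) inequality for stochastic kernels, the paper verifies it directly via the test-function characterization of the total variation norm, noting that $\int f(y_{t+1})O(dy_{t+1}|x_{t+1})\mathcal{T}(dx_{t+1}|x_t,u_t)$ is bounded by $1$ in $x_t$ whenever $\|f\|_\infty\leq 1$.
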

\begin{proof}{Proof.}
Let $f$ is a measurable function of $\mathds{Y}$ such that $\|f\|_\infty\leq1$. We write
\begin{align*}
&\int f(y_{t+1})P^{\pi}(dy_{t+1}|(y,u)_{[t,t-N]})-\int f(y_{t+1})P^{\pi^*}(dy_{t+1}|(y,u)_{[t,t-N]})\\
&=\int f(y_{t+1})O(dy_{t+1}|x_{t+1})\mathcal{T}(dx_{t+1}|x_t,u_t)P^{\pi}(dx_t|y_{[t,t-N]},u_{[t-1,t-N]})\\
&\qquad\qquad - \int f(y_{t+1})O(dy_{t+1}|x_{t+1})\mathcal{T}(dx_{t+1}|x_t,u_t)P^{\pi^*}(dx_t|y_{[t,t-N]},u_{[t-1,t-N]})\\
& \leq\|P^{\pi}(X_t\in\cdot|y_{[t,t-N]},u_{[t-1,t-N]})-P^{\pi^*}(X_t\in\cdot|y_{[t,t-N]},u_{[t-1,t-N]})\|_{TV}
\end{align*}
at the last step, we used the fact that $\int  f(y_{t+1})O(dy_{t+1}|x_{t+1})\mathcal{T}(dx_{t+1}|x_t,u_t)$ is bounded by $1$ as a function of $x_t$ for $\|f\|_\infty\leq 1$. Taking the supremum over all $\|f\|_\infty\leq 1$ concludes the proof.
\end{proof}

\begin{proof}{Proof of the main theorem:}
Let $\hat{z}_0=(\pi_0^-,y_1,y_0,u_0)$. Then we write
\begin{align*}
\tilde{J}^N_\beta(\hat{z}_1)=J_\beta^N(\pi^*,y_1,y_0,u_0)=\min_{u_1\in\mathds{U}}\left(\hat{c}(\pi^*,y_1,y_0,u_0,u_1)+\beta\sum_{y_2\in\mathds{Y}}J_\beta^N(\pi^*,y_2,y_1,u_1)P^{\pi^*}(y_2|y_1,y_0,u_1,u_0)\right).
\end{align*}

Furthermore,
\begin{align*}
J_\beta^*(\hat{z}_1)&=J_\beta^*(\pi_0^-,y_1,y_0,u_0)\nonumber\\
&=\min_{u_{1}\in\mathds{U}}\bigg(\hat{c}(\pi_0^-,y_1,y_0,u_0,u_1)+\beta \sum_{y_2\in\mathds{Y}} J_\beta^*(\pi_1^-(\pi_0^-,y_0,u_0),y_{2},y_1,u_{1})P^{\pi_0^-}(y_{2}|y_1,y_0,u_1,u_0)\bigg).
\end{align*}

Note that, for any $\pi\in\P(\mathds{X})$, we have 
\begin{align*}
\tilde{J}_\beta^N(\pi,y_2,y_1,u_1)=\tilde{J}_\beta^N(\pi^*,y_2,y_1,u_1)=J_\beta^N(\pi^*,y_2,y_1,u_1).
\end{align*}
In particular, we have that
\begin{align*}
J_\beta^N(\pi^*,y_2,y_1,u_1)=\tilde{J}_\beta^N(\pi_1^-(\pi_0^-,y_0,u_0),y_2,y_1,u_1).
\end{align*}
Hence, we can write the following
\begin{align*}
&|\tilde{J}^N_\beta(\hat{z}_0)-J^*_\beta(\hat{z}_0)|\leq \max_{u_1\in\mathds{U}}\left|\hat{c}(\pi^*,y_1,y_0,u_0,u_1)-\hat{c}(\pi_0^-,y_1,y_0,u_0,u_1)\right|\\
&+ \max_{u_1\in\mathds{U}}\beta \left| \sum_{y_2\in\mathds{Y}}J_\beta^N(\pi^*,y_2,y_1,u_1)P^{\pi^*}(y_2|y_1,y_0,u_1,u_0)-\sum_{y_2\in\mathds{Y}}J_\beta^N(\pi^*,y_2,y_1,u_1)P^{\pi_0^-}(y_2|y_1,y_0,u_1,u_0)\right|\\
&+\max_{u_1\in\mathds{U}}\beta \sum_{y_2\in\mathds{Y}}\left|\tilde{J}_\beta^N(\pi_1^-(\pi_0^-,y_0,u_0),y_2,y_1,u_1)- J_\beta^*(\pi_1^-(\pi_0^-,y_0,u_0),y_{2},y_1,u_{1})\right|P^{\pi_0^-}(y_{2}|y_1,y_0,u_1,u_0).
\end{align*}
Note that, by the definition of $\hat{c}$, we have 
\begin{align*}
\left|\hat{c}(\pi^*,y_1,y_0,u_0,u_1)-\hat{c}(\pi_0^-,y_1,y_0,u_0,u_1)\right|\leq\|c\|_\infty \|P^{\pi^*}(X_1\in\cdot|y_1,y_0,u_0)-P^{\pi_0^-}(X_1\in\cdot|y_1,y_0,u_0)\|_{TV}
\end{align*}
If we denote $\hat{z}_1=\left((\pi_1^-(\pi_0^-,y_0,u_0),y_2,y_1,u_1\right)$, using Lemma \ref{y_bound} we can write

\begin{align*}
&E_{\pi_0^-}^{\gamma}\left[|\tilde{J}^N_\beta(\hat{z}_0)-J^*_\beta(\hat{z}_0)|\right]\leq \|c\|_\infty E_{\pi_0^-}^{\gamma}\left[\|P^{\pi^*}(X_1\in\cdot|Y_1,Y_0,U_0)-P^{\pi_0^-}(X_1\in\cdot|Y_1,Y_0,U_0)\|_{TV}\right]\\
&+\max_{u_1\in\mathds{U}}\beta \|J_\beta^N\|_\infty E_{\pi_0^-}^{\gamma}\left[\|P^{\pi_0^-}(y_{2}|Y_1,Y_0,U_1,U_0)-P^{\pi^*}(y_{2}|Y_1,Y_0,U_1,U_0)\|_{TV}\right]\\
&+\max_{u_1\in\mathds{U}}\beta E_{\pi_0^-}^{\gamma}\left[ \sum_{y_2\in\mathds{Y}}\left|\tilde{J}_\beta^N(\hat{z}_1)-J_\beta^*(\hat{z}_1)\right|P^{\pi_0^-}(y_{2}|Y_1,Y_0,U_1,U_0)\right]\\
&\leq \left(\|c\|_\infty + \beta\|J_\beta^N\|_\infty\right) L_0 + \max_{u_1\in\mathds{U}}\beta E_{\pi_0^-}^{\gamma}\left[ \sum_{y_2\in\mathds{Y}}\left|\tilde{J}_\beta^N(\hat{z}_1)-J_\beta^*(\hat{z}_1)\right|P^{\pi_0^-}(y_{2}|Y_1,Y_0,u_1,U_0)\right]\\
&\leq \left(\|c\|_\infty + \beta\|J_\beta^N\|_\infty\right) L_0 + \sup_{\hat{\gamma}\in\hat{\Gamma}}\beta E_{\pi_0^-}^{\hat{\gamma}}\left[\left|\tilde{J}_\beta^N(\hat{z}_1)-J_\beta^*(\hat{z}_1)\right|\right]
\end{align*}
where
\begin{align*}
L_t:=\sup_{\hat{\gamma}\in\hat{\Gamma}}E_{\pi_0^-}^{\hat{\gamma}}\left[\|P^{\pi_t^-}(X_{t+N}\in\cdot|Y_{[t,t+N]},U_{[t,t+N-1]})-P^{\pi^*}(X_{t+N}\in\cdot|Y_{[t,t+N]},U_{[t,t+N-1]})\|_{TV}\right]
\end{align*}
Then, following the same steps for $E_{\pi_0^-}^{\hat{\gamma}}\left[\left|\tilde{J}_\beta^N(\hat{z}_1)-J_\beta^*(\hat{z}_1)\right|\right]$ and repeating the procedure , one can see that
\begin{align*}
&E_{\pi_0^-}^{\gamma}\left[|\tilde{J}^N_\beta(\hat{z}_0)-J^*_\beta(\hat{z}_0)|\right]\leq  \left(\|c\|_\infty + \beta\|J_\beta^N\|_\infty\right) \sum_{t=0}^\infty \beta^t L_t
\end{align*}
Note that $\|J_\beta^N\|_\infty\leq \frac{\|c\|_\infty}{1-\beta}$. Hence we can conclude
\begin{align*}
&E_{\pi_0^-}^{\gamma}\left[|\tilde{J}^N_\beta(\hat{z}_0)-J^*_\beta(\hat{z}_0)|\right] \leq \frac{\|c\|_\infty }{(1-\beta)} \sum_{t=0}^\infty \beta^t L_t.
\end{align*}

\end{proof}

\section{Proof of Theorem \ref{robust_bound}}\label{proof_robust_bound}
\begin{proof}{Proof.}
We let $\hat{z}_0=(\pi_0^-,y_1,y_0,u_0)$. We denote the minimum selector for the approximate MDP by
\begin{align*}
u_1^N:=\tilde{\phi}^N(\pi_0^-,y_1,y_0,u_0)=\phi^N(\pi^*,y_1,y_0,u_0)
\end{align*}
and write
\begin{align*}
J_\beta(\hat{z}_0,\tilde{\phi}^N)&=J_\beta(\pi_0^-,y_1,y_0,u_0,\tilde{\phi}^N)\nonumber\\
&=\hat{c}(\pi_0^-,y_1,y_0,u_0,u_1^N)+\beta \sum_{y_2\in\mathds{Y}} J_\beta(\pi_1^-(\pi_0^-,y_0,u_0),y_{2},y_1,u_1^N,\tilde{\phi}^N)P^{\pi_0^-}(y_{2}|y_1,y_0,u_1^N,u_0).
\end{align*}
Furthermore, we write the optimality equation for $\tilde{J}_\beta^N$ as follows
\begin{align*}
\tilde{J}_\beta^N(\hat{z}_0)=\hat{c}(\pi^*,y_1,y_0,u_0,u_1^N)+\beta  \sum_{y_2\in\mathds{Y}}\tilde{J}_\beta^N(\pi_1^-(\pi_0^-,y_0,u_0),y_{2},y_1,u_1^N)P^{\pi^*}(y_{2}|y_1,y_0,u_1^N,u_0).
\end{align*}
Hence, denoting $\hat{z}_1:=\left(\pi_1^-(\pi_0^-,y_0,u_0),y_{2},y_1,u_1^N\right)$ and using Lemma \ref{y_bound}, we can write that

\begin{align*}
&E_{\pi_0^-}^{\hat{\gamma}}\left[\left|J_\beta(\hat{z}_0,\tilde{\phi}^N)-\tilde{J}_\beta^N(\hat{z}_0)\right|\right]\leq \sup_{\hat{\gamma}\in\hat{\Gamma}}E_{\pi_0^-}^{\hat{\gamma}}\left[\left|\hat{c}(\pi_0^-,Y_1,Y_0,U_0,U_1)-\hat{c}(\pi^*,Y_1,Y_0,U_0,U_1)\right|\right]\\
&\qquad+\sup_{\hat{\gamma}\in\hat{\Gamma}}E_{\pi_0^-}^{\hat{\gamma}}\left[\beta \sum_{y_2\in\mathds{Y}} J_\beta(\hat{z}_1,\tilde{\phi}^N)P^{\pi_0^-}(y_{2}|Y_1,Y_0,U_1,U_0)- \beta  \sum_{y_2\in\mathds{Y}}\tilde{J}_\beta^N(\hat{z}_1)P^{\pi^*}(y_{2}|Y_1,Y_0,U_1,U_0)\right]\\
&\leq \|c\|_\infty \sup_{\hat{\gamma}\in\hat{\Gamma}}E_{\pi_0^-}^{\hat{\gamma}}\left[\|P^{\pi^*}(X_1\in\cdot|Y_1,Y_0,U_0)-P^{\pi_0^-}(X_1\in\cdot|Y_1,Y_0,U_0)\|_{TV}\right]\\
&\qquad + \beta \|\tilde{J}_\beta^N\|_\infty \sup_{\hat{\gamma}\in\hat{\Gamma}} E_{\pi_0^-}^{\hat{\gamma}}\left[\|P^{\pi_0^-}(y_{2}|Y_1,Y_0,U_1,U_0)-P^{\pi^*}(y_{2}|Y_1,Y_0,U_1,U_0)\|_{TV}\right]\\
&\qquad +\beta  \sup_{\hat{\gamma}\in\hat{\Gamma}}E_{\pi_0^-}^{\hat{\gamma}}\left[\left|J_\beta(\hat{z}_1,\tilde{\phi}^N)-\tilde{J}_\beta^N(\hat{z}_1)\right|\right]\\
&\leq \|c\|_\infty L_0 + \beta \|\tilde{J}_\beta^N\|_\infty L_0 + \beta  \sup_{\hat{\gamma}\in\hat{\Gamma}}E_{\pi_0^-}^{\hat{\gamma}}\left[\left|J_\beta(\hat{z}_1,\tilde{\phi}^N)-\tilde{J}_\beta^N(\hat{z}_1)\right|\right].
\end{align*}
Following the same steps for $E_{\pi_0^-}^{\hat{\gamma}}\left[\left|J_\beta(\hat{z}_1,\tilde{\phi}^N)-\tilde{J}_\beta^N(\hat{z}_1)\right|\right]$ and repeating the same procedure,  with $\|\tilde{J}_\beta^N\|_\infty\leq \frac{\|c\|_\infty}{1-\beta}$ one can conclude that
\begin{align}\label{mid_bound}
&E_{\pi_0^-}^{\hat{\gamma}}\left[\left|J_\beta(\hat{z}_0,\tilde{\phi}^N)-\tilde{J}_\beta^N(\hat{z}_0)\right|\right]\leq\frac{\|c\|_\infty }{(1-\beta)}\sum_{t=0}^\infty \beta^t L_t.
\end{align}

Now, we go back to the theorem statement to write
\begin{align*}
E_{\pi_0^-}^{\hat{\gamma}}\left[\left|J_\beta(\hat{z}_0,\tilde{\phi}^N) -J^*_\beta(\hat{z}_0)\right|\right]&\leq E_{\pi_0^-}^{\hat{\gamma}}\left[ \left|J_\beta(\hat{z},\tilde{\phi}^N)-\tilde{J}_\beta^N(\hat{z})\right|\right]+ E_{\pi_0^-}^{\hat{\gamma}}\left[\left|\tilde{J}_\beta^N(\hat{z})-J_\beta^*(\hat{z})\right|\right]\\
&\leq \frac{2\|c\|_\infty }{(1-\beta)}\sum_{t=0}^\infty \beta^t L_t.
\end{align*}
The last step follows from (\ref{mid_bound}) and Theorem \ref{cont_bound}. 

\end{proof}

\bibliographystyle{plain}

\bibliography{SerdarBibliography,AliBibliography,references_acc,SerdarBibliography_acc,references}

\begin{thebibliography}{10}

\bibitem{Bil99}
P.~Billingsley.
\newblock {\em Convergence of probability measures}.
\newblock New York: Wiley, 2nd edition, 1999.

\bibitem{dobrushin1956central}
R.L. Dobrushin.
\newblock Central limit theorem for nonstationary {M}arkov chains. i.
\newblock {\em Theory of Probability \& Its Applications}, 1(1):65--80, 1956.

\bibitem{even2003learning}
Eyal Even-Dar, Yishay Mansour, and Peter Bartlett.
\newblock Learning rates for q-learning.
\newblock {\em Journal of machine learning Research}, 5(1), 2003.

\bibitem{Feinberg2}
E.~A. Feinberg.
\newblock Controlled {M}arkov processes with arbitrary numerical criteria.
\newblock {\em Th. Probability and its Appl.}, pages 486--503, 1982.

\bibitem{FeKaZa12}
E.A. Feinberg, P.O. Kasyanov, and N.V. Zadioanchuk.
\newblock Average cost {M}arkov decision processes with weakly continuous
  transition probabilities.
\newblock {\em Math. Oper. Res.}, 37(4):591--607, Nov. 2012.

\bibitem{FeKaZg14}
E.A. Feinberg, P.O. Kasyanov, and M.Z. Zgurovsky.
\newblock Partially observable total-cost {M}arkov decision process with weakly
  continuous transition probabilities.
\newblock {\em Mathematics of Operations Research}, 41(2):656--681, 2016.

\bibitem{golowich2022planning}
N.~Golowich, A.~Moitra, and D.~Rohatgi.
\newblock Planning in observable {POMDP}s in quasipolynomial time.
\newblock {\em arXiv preprint arXiv:2201.04735}, 2022.

\bibitem{hansen2013solving}
E.~A. Hansen.
\newblock Solving pomdps by searching in policy space.
\newblock {\em arXiv preprint arXiv:1301.7380}, 2013.

\bibitem{Her89}
O.~Hern\'andez-Lerma.
\newblock {\em Adaptive {M}arkov Control Processes}.
\newblock Springer-Verlag, 1989.

\bibitem{HernandezLermaMCP}
O.~Hernandez-Lerma and J.~B. Lasserre.
\newblock {\em Discrete-Time {M}arkov Control Processes: Basic Optimality
  Criteria}.
\newblock Springer, 1996.

\bibitem{jaakkola1995reinforcement}
Tommi Jaakkola, Satinder~P. Singh, and Michael~I. Jordan.
\newblock Reinforcement learning algorithm for partially observable markov
  decision problems.
\newblock In {\em Advances in neural information processing systems}, pages
  345--352, 1995.

\bibitem{kac}
M.~Kac.
\newblock On the notion of recurrence in discrete stochastic processes.
\newblock {\em Bull. AMS}, 53:1002--1010, 1947.

\bibitem{KSYWeakFellerSysCont}
A.~D. Kara, N.~Saldi, and S.~Y\"uksel.
\newblock Weak feller property of non-linear filters.
\newblock {\em Systems \& Control Letters}, 134:104--512, 2019.

\bibitem{kara2020near}
Ali~Devran Kara and Serdar Yuksel.
\newblock Near optimality of finite memory feedback policies in partially
  observed markov decision processes.
\newblock {\em arXiv preprint arXiv:2010.07452}, 2020.

\bibitem{Kri16}
V.~Krishnamurthy.
\newblock {\em Partially observed {M}arkov decision processes: from filtering
  to controlled sensing}.
\newblock Cambridge University Press, 2016.

\bibitem{li2020sample}
Gen Li, Yuting Wei, Yuejie Chi, Yuantao Gu, and Yuxin Chen.
\newblock Sample complexity of asynchronous q-learning: Sharper analysis and
  variance reduction.
\newblock {\em arXiv preprint arXiv:2006.03041}, 2020.

\bibitem{lin1992memory}
Long-Ji Lin and Tom~M Mitchell.
\newblock {\em Memory approaches to reinforcement learning in non-Markovian
  domains}.
\newblock Citeseer, 1992.

\bibitem{Lov91-(b)}
W.S. Lovejoy.
\newblock A survey of algorithmic methods for partially observed {M}arkov
  decision processes.
\newblock {\em Annals of Operations Research}, 28:47--66, 1991.

\bibitem{mccallum1997reinforcement}
Andrew McCallum.
\newblock Reinforcement learning with selective perception and hidden state.
\newblock {\em Doctoral dissertation, Department of Computer Science,
  University of Rochester.}, 1997.

\bibitem{mcdonald2018stability}
C.~McDonald and S.~Y\"uksel.
\newblock Converse results on filter stability criteria and stochastic
  non-linear observability.
\newblock {\em arXiv:1812.01772}, 2018.

\bibitem{MYCDC2019observability}
C.~McDonald and S.~Y{\"u}ksel.
\newblock Observability and filter stability for partially observed markov
  processes.
\newblock In {\em 2019 IEEE 58th Conference on Decision and Control (CDC)},
  pages 1623--1628. IEEE, 2019.

\bibitem{mcdonald2020exponential}
C.~McDonald and S.~Y\"uksel.
\newblock Exponential filter stability via {D}obrushin's coefficient.
\newblock {\em Electronic Communications in Probability}, 25, 2020.

\bibitem{MYRobustControlledFS}
C.~McDonald and S.~Y\"uksel.
\newblock Robustness to incorrect priors and controlled filter stability in
  partially observed stochastic control.
\newblock {\em SIAM Journal on Control and Optimization, to appear (also
  arXiv:2110.03720)}, 2022.

\bibitem{Par67}
K.R. Parthasarathy.
\newblock {\em Probability Measures on Metric Spaces}.
\newblock AMS Bookstore, 1967.

\bibitem{pineau2006anytime}
J.~Pineau, G.~Gordon, and S.~Thrun.
\newblock Anytime point-based approximations for large pomdps.
\newblock {\em Journal of Artificial Intelligence Research}, 27:335--380, 2006.

\bibitem{porta2006point}
J.~M. Porta, N.~Vlassis, M.~T.~J. Spaan, and P.~Poupart.
\newblock Point-based value iteration for continuous pomdps.
\newblock {\em Journal of Machine Learning Research}, 7(Nov):2329--2367, 2006.

\bibitem{Rhe74}
D.~Rhenius.
\newblock Incomplete information in {M}arkovian decision models.
\newblock {\em Ann. Statist.}, 2:1327--1334, 1974.

\bibitem{SaYuLi15c}
N.~Saldi, S.~Y{\"u}ksel, and T.~Linder.
\newblock On the asymptotic optimality of finite approximations to markov
  decision processes with borel spaces.
\newblock {\em Mathematics of Operations Research}, 42(4):945--978, 2017.

\bibitem{SYLTAC2017POMDP}
N.~Saldi, S.~Y\"uksel, and T.~Linder.
\newblock Finite model approximations for partially observed markov decision
  processes with discounted cost.
\newblock {\em IEEE Transactions on Automatic Control}, 65, 2020.

\bibitem{singh1994learning}
Satinder~P. Singh, Tommi Jaakkola, and Michael~I. Jordan.
\newblock Learning without state-estimation in partially observable markovian
  decision processes.
\newblock {\em Machine Learning Proceedings 1994}, pages 284--292, 1994.

\bibitem{smith2012point}
T.~Smith and R.~Simmons.
\newblock Point-based pomdp algorithms: Improved analysis and implementation.
\newblock {\em arXiv preprint arXiv:1207.1412}, 2012.

\bibitem{Mahajan2019}
J.~{Subramanian} and A.~{Mahajan}.
\newblock Approximate information state for partially observed systems.
\newblock In {\em 2019 IEEE 58th Conference on Decision and Control (CDC)},
  pages 1629--1636, 2019.

\bibitem{jaakkola1994convergence}
Tommi T.~Jaakkola, M.~I. Jordan, and S.~P. Singh.
\newblock On the convergence of stochastic iterative dynamic programming
  algorithms.
\newblock {\em Neural computation}, 6(6):1185--1201, 1994.

\bibitem{TsitsiklisQLearning}
J.~N. Tsitsiklis.
\newblock Asynchronous stochastic approximation and q-learning.
\newblock {\em Machine Learning}, 16:185--202, 1994.

\bibitem{villani2008optimal}
C.~Villani.
\newblock {\em Optimal transport: old and new}.
\newblock Springer, 2008.

\bibitem{spaan2005perseus}
N.~Vlassis and M.~T.~J. Spaan.
\newblock Perseus: Randomized point-based value iteration for pomdps.
\newblock {\em Journal of artificial intelligence research}, 24:195--220, 2005.

\bibitem{wainwright2019stochastic}
Martin Wainwright.
\newblock Stochastic approximation with cone-contractive operators: Sharper
  $l_\infty$-bounds for $q$-learning.
\newblock {\em arXiv preprint arXiv:1905.06265}, 2019.

\bibitem{Whi91}
C.C. White.
\newblock A survey of solution techniques for the partially observed {M}arkov
  decision process.
\newblock {\em Annals of Operations Research}, 32:215--230, 1991.

\bibitem{white1994finite}
C.~C. White-III and W.~T. Scherer.
\newblock Finite-memory suboptimal design for partially observed markov
  decision processes.
\newblock {\em Operations Research}, 42(3):439--455, 1994.

\bibitem{yu2008near}
Huizhen Yu and Dimitri~P Bertsekas.
\newblock On near optimality of the set of finite-state controllers for average
  cost pomdp.
\newblock {\em Mathematics of Operations Research}, 33(1):1--11, 2008.

\bibitem{Yus76}
A.A. Yushkevich.
\newblock Reduction of a controlled {M}arkov model with incomplete data to a
  problem with complete information in the case of {B}orel state and control
  spaces.
\newblock {\em Theory Prob. Appl.}, 21:153--158, 1976.

\bibitem{zhou2008density}
E.~Zhou, M.~C. Fu, and S.~I. Marcus.
\newblock A density projection approach to dimension reduction for
  continuous-state {P}{O}{M}{D}{P}s.
\newblock In {\em Decision and Control, 2008. CDC 2008. 47th IEEE Conference
  on}, pages 5576--5581, 2008.

\bibitem{zhou2010solving}
E.~Zhou, M.~C. Fu, and S.~I. Marcus.
\newblock Solving continuous-state {P}{O}{M}{D}{P}s via density projection.
\newblock {\em IEEE Transactions on Automatic Control}, 55(5):1101 -- 1116,
  2010.

\bibitem{ZhHa01}
R.~Zhou and E.A. Hansen.
\newblock An improved grid-based approximation algorithm for {POMDP}s.
\newblock In {\em Int. J. Conf. Artificial Intelligence}, pages 707--714, Aug.
  2001.

\end{thebibliography}

\end{document}